\DeclareMathAlphabet{\mathsf}{OT1}{cmss}{m}{n}
\SetMathAlphabet{\mathsf}{bold}{OT1}{cmss}{bx}{n}
\newcommand{\sym}{\mathop{\mathrm{sym}}}
\newcommand{\rk}{\mathop{\mathrm{rank}}}
\newcommand{\vect}{\mathop{\mathrm{vec}}}
\providecommand{\norm}[1]{\|#1\|}
\newcommand{\removed}[1]{}
\newcommand*{\Ic}{\cI^{\perp}}
\newcommand*\patchAmsMathEnvironmentForLineno[1]{%
  \expandafter\let\csname old#1\expandafter\endcsname\csname #1\endcsname
  \expandafter\let\csname oldend#1\expandafter\endcsname\csname end#1\endcsname
  \renewenvironment{#1}%
     {\linenomath\csname old#1\endcsname}%
     {\csname oldend#1\endcsname\endlinenomath}}%
\newcommand*\patchBothAmsMathEnvironmentsForLineno[1]{%
  \patchAmsMathEnvironmentForLineno{#1}%
  \patchAmsMathEnvironmentForLineno{#1*}}%
\title{\huge \bf On Landscape of Lagrangian Function\\ and Stochastic Search for\\ Constrained Nonconvex Optimization\footnote{Working in Progress}}
\date{}
\author{Zhehui Chen, Xingguo Li, Lin F. Yang, Jarvis Haupt, and Tuo Zhao\thanks{Zhehui Chen and Xingguo Li contribute equally; Zhehui Chen and Tuo Zhao are affiliated with School of Industrial and Systems Engineering at Georgia Institute of Technology; Xingguo Li and Jarvis Haupt are affiliated with Department of Electrical and Computer Engineering at University of Minnesota; Lin F. Yang is affiliated with Operation Research and Financial Engineering Department at Princeton University; Email:{tourzhao@gatech.edu}; Tuo Zhao is the corresponding author.}}
\begin{document}

\maketitle

\begin{abstract}
We study constrained nonconvex optimization problems in machine learning, signal processing, and stochastic control. It is well-known that these problems can be rewritten to a minimax problem in a Lagrangian form. However, due to the lack of convexity, their landscape is not well understood and how to find the stable equilibria of the Lagrangian function is still unknown. To bridge the gap, we study the landscape of the Lagrangian function. Further, we define a special class of Lagrangian functions. They enjoy two properties: 1.Equilibria are either stable or unstable (Formal definition in Section~\ref{sec-equi}); 2.Stable equilibria correspond to the global optima of the original problem. We show that a generalized eigenvalue (GEV) problem, including canonical correlation analysis and other problems, belongs to the class. Specifically, we characterize its stable and unstable equilibria by leveraging an invariant group and symmetric property (more details in Section~\ref{GEV}). Motivated by these neat geometric structures, we propose a simple, efficient, and stochastic primal-dual algorithm solving the online GEV problem. Theoretically, we provide sufficient conditions, based on which we establish an asymptotic convergence rate and obtain the first sample complexity result for the online GEV problem by diffusion approximations, which are widely used in applied probability and stochastic control. Numerical results are provided to support our theory.
\end{abstract}


\section{Introduction}

We often encounter the following optimization problem in machine learning, signal processing, and stochastic control:
\begin{align}\label{eqn-obj-1}
\min_{X}f(X)\quad\textrm{subject to}\quad X\in \Omega,
\end{align}
where $f:\RR^d\rightarrow \RR$ is a loss function, $\Omega:\triangleq \{X\in \RR^d:g_i(X)= 0, i=1,2,...,m\}$ denotes a feasible set, $m$ is the number of constraints, and $g_i:\RR^{d}\rightarrow \RR$'s are the differentiable functions that impose constraints into model parameters. For notational simplicity, we define $\cG(X)=[g_1(X),...,g_m(X)]^\top$ and $\Omega= \{ X\in\RR^d: \cG(X)=0\}$. Principal component analysis (PCA), canonical correlation analysis (CCA), matrix factorization/sensing/completion, phase retrieval, and many other problems \citep{friedman2001elements,sun2016geometric,bhojanapalli2016global,li2016symmetry,ge2016matrix,chen2017online,zhu2017global} can be viewed as special examples of \eqref{eqn-obj-1}. Many algorithms have been proposed to solve \eqref{eqn-obj-1}. For the unconstrained ($\Omega=\RR^d$) or a simple constraint $\cG(X)$, e.g., the spherical constraint, $\cG(X):=\norm{X}_2-1$, we can apply simple first order algorithms such as the projected gradient descent algorithm \citep{luenberger1984linear}.

However, when $\cG(X)$ is complicated, the aforementioned algorithms are often not applicable or inefficient. This is because the projection to $\Omega$ does not admit a closed form expression and can be computationally expensive in each iteration. To address this issue, we convert \eqref{eqn-obj-1} to a min-max problem using the Lagrangian multiplier method. Specifically, instead of solving \eqref{eqn-obj-1}, we solve the following problem:
\begin{align}\label{eqn-Lag-1}
\min_{X\in \RR^d} \max_{Y \in \RR^m}~ \cL(X,Y):= f(X)+Y^\top \cG(X),
\end{align}
where $Y\in\RR^m$ is the Lagrangian multiplier. $\cL(X,Y)$ is often referred as the Lagrangian function in existing literature \citep{boyd2004convex}. The existing literature on optimization also refers to $X$ as the primal variable and $Y$ as the dual variable. Accordingly, \eqref{eqn-obj-1} is called the primal problem. From the perspective of game theory, they can be viewed as two players competing with each other and eventually achieving some equilibrium. When $f(X)$ is convex and $\Omega$ is convex or the boundary of a convex set, the optimization landscape of \eqref{eqn-Lag-1} is essentially convex-concave, that is, for any fixed $Y$, $\cL(X,Y)$ is convex in $X$, and for any fixed $X$, $\cL(X,Y)$ is concave in $Y$. Such a landscape further implies that the equilibrium of \eqref{eqn-Lag-1} is a saddle point, whose primal variable is equivalent to the global optimum of \eqref{eqn-obj-1} under strong duality conditions. To solve \eqref{eqn-Lag-1}, we resort to primal-dual algorithms, which iterate over both $X$ and $Y$ (usually in an alternating manner). The global convergence rates to the equilibrium are also established accordingly for these algorithms \citep{lan2011primal,chen2014optimal,iouditski2014primal}. 

When $f(X)$ and $\Omega$ are nonconvex, both \eqref{eqn-obj-1} and \eqref{eqn-Lag-1} become much more computationally challenging, NP-Hard in general. Significant progress has been made toward solving the primal problem \eqref{eqn-obj-1}. For example, \cite{ge2015escaping} show that when certain tensor factorization satisfies the so-called strict saddle properties, one can apply some first order algorithms such as the projected gradient algorithm, and the global convergence in polynomial time can be guaranteed. Their results further motivate many follow-up works, proving that many problems can be formulated as strict saddle optimization problems, including PCA, multiview learning, phase retrieval, matrix factorization/sensing/completion, complete dictionary learning \citep{sun2016geometric,bhojanapalli2016global,li2016symmetry,ge2016matrix,chen2017online,zhu2017global}. Note that these strict saddle optimization problems are either unconstrained or just with a simple spherical constraint. However, for many other nonconvex optimization problems, $\Omega$ can be much more complicated. To the best of our knowledge, when $\Omega$ is not only nonconvex but also complicated, the applicable algorithms and convergence guarantees are still largely unknown in existing literature.

To handle the complicated $\Omega$, this paper proposes to investigate the min-max problem \eqref{eqn-Lag-1}. Specifically, we first define a special class of Lagrangian functions, where the landscape of  $\cL(X,Y)$ enjoys the following good properties:
\begin{itemize}[leftmargin=*]
\item {\it There exist only two types of equilibria -- stable and unstable equilibria. At an unstable equilibrium, $\cL(X,Y)$ has negative curvature with respect to the primal variable $X$. More details in Section~\ref{sec-equi}.}
\item {\it All stable equilibria correspond to the global optima of the primal problem \eqref{eqn-obj-1}}.
\end{itemize}
Both properties are intuitive. On the one hand, the negative curvature in the first property enables the primal variable to escape from the unstable equilibria along some decent direction. 
On the other hand, the second property ensures that we do not get spurious local optima of \eqref{eqn-obj-1}, that is all local minima must also be global optima.

We then study a generalized eigenvalue (GEV) problem, which includes CCA, Fisher discriminant analysis (FDA, \cite{mika1999fisher}),  sufficient dimension reduction (SDR, \cite{cook2005sufficient}) as special examples. Specifically, GEV solves  
 \begin{align}
	X^*=\argmin_{X \in \RR^{d\times r}}~f(X):={-\tr(X^\top A X)}~~\textrm{s.t.}~~X \in\cT_{B}:=  \{ X\in \RR^{d\times r} : X^\top B X=I_r \}, \label{eqn:original}
\end{align}
where $A,B \in \RR^{d \times d}$ are symmetric, $B$ is positive semidefinite. We rewrite \eqref{eqn:original} as a min-max problem,
\begin{align}
\min_X\max_Y\cL(X,Y)=-\tr( X^\top A X) + \langle Y,X^\top B X-I_r \rangle,\label{eqn:minimax}
\end{align}
 where $Y\in \RR^{r\times r}$ is the Lagrangian multiplier. Theoretically, we show that the Lagrangian function in \eqref{eqn:minimax} exactly belongs to our previously defined class. Motivated by our defined landscape structures, we then solve an online version of \eqref{eqn:minimax}, where we can only access independent unbiased stochastic approximations of $A,~B$ and directly accessing $A$ and $B$ is prohibited. Specifically, at the $k$-th iteration, we only obtain independent $A^{(k)}$ and $B^{(k)}$ satisfying
 \begin{align*}
 \EE A^{(k)} =A\quad\textrm{and}\quad\EE B^{(k)} = B.
 \end{align*}
Computationally, we propose a simple stochastic primal-dual algorithm, which is a stochastic variant of the generalized Hebbian algorithm (GHA, \cite{gorrell2006generalized}). Theoretically, we establish its asymptotic rate of convergence to stable equilibria for our stochastic GHA (SGHA) based on the diffusion approximations~\citep{kushner2003stochastic}. Specifically, we show that, asymptotically, the solution trajectory of SGHA weakly converges to the solutions of stochastic differential equations (SDEs). By studying the analytical solutions of these SDEs, we further establish the asymptotic sample/iteration complexity of SGHA under certain regularity conditions~\citep{harold1997stochastic,li2016near,chen2017online}. To the best of our knowledge, this is the first asymptotic sample/iteration complexity analysis of a stochastic optimization algorithm for solving the online version of GEV problem. Numerical experiments are presented to justify our theory.

Our work is closely related to several recent results on solving GEV problems. For example, \cite{ge2016efficient} propose a multistage semi-stochastic optimization algorithm for solving GEV problems with a finite sum structure. At each optimization stage, their algorithm needs to access the exact $B$ matrix, and compute the approximate inverse of $B$ by solving a quadratic program, which is not allowed in our setting. Similar matrix inversion approaches are also adopted by a few other recently proposed algorithms for solving GEV problem \citep{allen2016doubly,arora2017stochastic}. In contrast, our proposed SGHA is a fully stochastic algorithm, which does not require any matrix inversion.  

Moreover, our work is also related to several more complicated min-max problems, such as Markov Decision Process with function approximation, Generative Adversarial Network, multistage stochastic programming and control~\citep{sutton2000policy,shapiro2009lectures,goodfellow2014generative}. Many primal-dual algorithms have been proposed to solve these problems. However, most of these algorithms are even not guaranteed to converge. As mentioned earlier, when the convex-concave structure is missing, the min-max problems go far beyond the existing theories. Moreover, both primal and dual iterations involve sophisticated stochastic approximations (equally or more difficult than our online version of GEV). This paper makes the attempt on understanding the optimization landscape of these challenging min-max problems. Taking our results as an initial start, we expect more sophisticated and stronger follow-up works that apply to these min-max problems.

\noindent \textbf{Notations}.  Given an integer $d$, we denote  $I_d$ as a $d \times d$ identity matrix, $[d] = \{1,2,\ldots,d \}$. Given an index set $\cI\subseteq [d]$ and 
 a matrix $X \in \RR^{d \times r}$, we denote $\Ic = [d] \backslash \cI$ as the complement set of $\cI$, $X_{:,i}$ ($X_{i,:}$) as the $i$-th column (row) of $X$, $X_{i,j}$ as the $(i,j)$-th entry of $X$, and $X_{:,\cI}$ ($X_{\cI,:}$) as the column (row) submatrix of $X$ indexed by $\cI$, $\vect(X) \in \RR^{dr}$ as the vectorization of $X$, $\textrm{Col}(X)$ as the column space of $X$, and $\textrm{Null}(X)$ as the null space of $X$. Given a symmetric matrix $X \in \RR^{d \times d}$, we denote $\lambda_{\min\slash\max}(X)$ as its smallest\slash largest singular value, and denote the eigenvalue decomposition of $X$ as $X = O \Lambda O^{\top}$, where $\Lambda=\diag(\lambda_1,...\lambda_d)$ with $\lambda_1\geq\cdots\geq\lambda_{d}$,  denote $\norm{X}_2$ as the spectral norm of $X$. Given two matrices $X$ and $Y$, $X \otimes Y$ as the Kronecker product of $X$, $Y$.


\section{Characterization of Equilibria}\label{sec-equi}

Recall the Lagrangian function in \eqref{eqn-Lag-1}. Then we start with characterizing its equilibria. By KKT conditions, an equilibrium $(X,Y)$  satisfies
\begin{align*}
\nabla_{X} \cL(X,Y) = \nabla_X f(X) + Y^\top \nabla_X \cG(X) = 0\quad\textrm{and}\quad\nabla_{Y} \cL(X,Y) = \cG(X) = 0,
\end{align*}
which only contains the first order information of $\cL(X,Y)$. To further distinguish the difference among the equilibria, we define two types of equilibria by the second order information.
\begin{definition}\label{def:crit-point}
Given the Lagrangian function $\cL(X,Y)$ in \eqref{eqn-Lag-1}, a point $(X,Y)$ is called:
\begin{itemize}[leftmargin=*]
\item {\bf (1)} An \textbf{\textit{equilibrium}} of $\cL(X,Y)$, if
\begin{align*}
\displaystyle\nabla \cL(X,Y)= \left[ \begin{array}{c}
		\nabla_X \cL(X,Y) \\
		\nabla_Y \cL(X,Y)
	\end{array} \right] = 0.
\end{align*}

\item {\bf (2)}   An \textbf{\textit{equilibrium}} $(X,Y)$ is \textbf{\textit{unstable}},  if $(X,Y)$ is an equilibrium and $\lambda_{\min} \left(\nabla_X^2 \cL(X,Y) \right) < 0.$

\item {\bf (3)} An \textbf{\textit{equilibrium}} $(X,Y)$ is \textbf{\textit{stable}}, if $(X,Y)$ is an equilibrium, $\nabla_X^2 \cL(X,Y) \succeq 0$, and $\cL(X,Y)$ is strongly convex over a restricted domain.
\end{itemize}
\end{definition}

Note that (2) in~Definition~\ref{def:crit-point} has a similar strict saddle property over a manifold in~\cite{ge2015escaping}. The motivation behind Definition~\ref{def:crit-point} is intuitive. When $\cL(X,Y)$ has negative curvature with respect to the primal variable $X$ at an equilibrium, we can find a direction in $X$ to further decrease $\cL(X,Y)$. Therefore, a tiny perturbation can break this unstable equilibrium. An illustrative example is presented in Figure~\ref{illustrate}. Moreover, at a stable equilibrium $(X^*,Y^*)$, there is restricted strong convexity, which relates to several conditions, e.g., Polyak \L ojasiewicz conditions \citep{polyak1963gradient}, i.e., $$\norm{\nabla_X\cL(X,Y^*)}^2\geq \mu(\cL(X,Y^*)-\cL(X^*,Y^*)),$$ for $X$ belonging to a small region near $X^*$ and $\mu>0$ is a constant, or Error Bound conditions \citep{luo1993error}.  With this property, we cannot decrease $\cL(X,Y)$ along any direction with respect to $X$. Definition~\ref{def:crit-point} excludes the high order unstable equilibrium, which may exist due to the degeneracy of $\nabla_X^2 \cL(X,Y) $. Specifically, such a high order unstable equilibrium cannot be identified by the second order information, e.g., $$\cL(x_1,x_2,y)=x_1^3+x_2^2+y\cdot(x_1-x_2).$$ 
\begin{figure}
\centering
\includegraphics[width=1\textwidth]{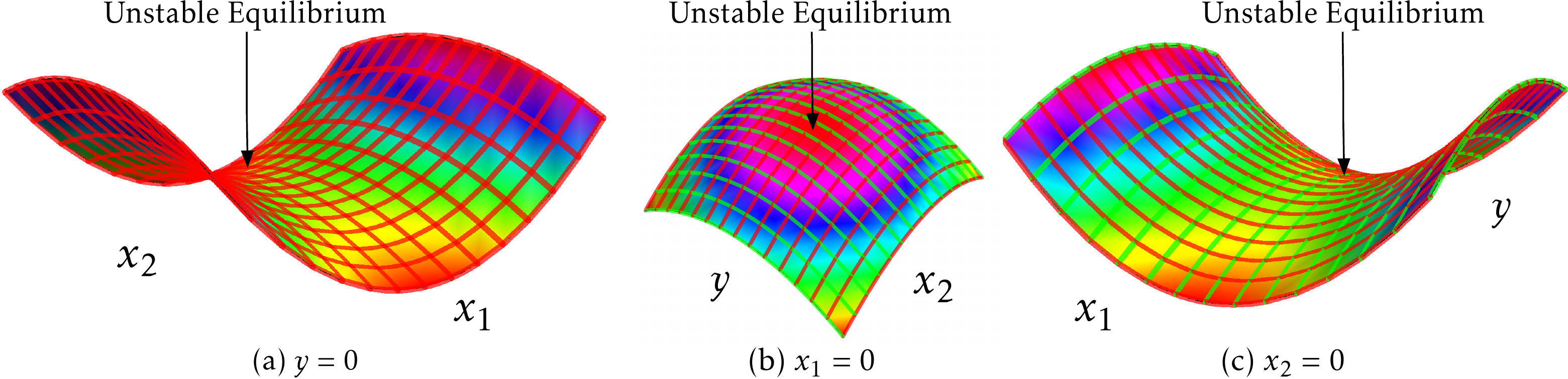}
\caption{An illustration of an unstable equilibrium: $\min_{x_1,x_2}\max_{y}\cL(x_1,x_2,y)=x_1^2-x_2^2-y^2$. Notice that $(0,0,0)$ is an equilibrium but unstable. For visualization, we show three views: (a) $\cL(x_1,x_2,0)$; (b) $\cL(0,x_2,y)$; (c) $\cL(x_1,0,y)$. The red lines correspond to $x_1$ and $x_2$, and the green one corresponds to the $y$.}
\label{illustrate}
\end{figure}
$(0,0,0)$ is an equilibrium with a positive semidefinite Hessian matrix. However, it is an unstable equilibria, since a small perturbation to $x_1$ can break this equilibrium. Such an equilibrium makes the landscape highly more complicated. Overall, we consider a specific class of Lagrangian functions throughout the rest of this paper. They enjoy the following properties:
\begin{itemize}[leftmargin=*]
\item All equilibria are either stable or unstable (i.e., no high order unstable equilibria);

\item All stable equilibria correspond to the global optima of the primal problem.
\end{itemize}
As mentioned earlier, the first property ensures that the second order information can identify the type of equilibria. The second property guarantees that we do not get spurious optima for \eqref{eqn-obj-1} as long as an algorithm attains a stable equilibrium. Several machine learning problems belong to this class, such as the generalized eigenvalue decomposition problem.

\section{Generalized Eigenvalue Decomposition}\label{GEV}

We consider the generalized eigenvalue (GEV) problem as a motivating example, which includes CCA, FDA, SDR, etc. as special examples. Recall its min-max formulation~\eqref{eqn:minimax}:
\begin{align*}
\min_{X\in\RR^{d\times r}}\max_{Y\in\RR^{r\times r}}\cL(X,Y)=-\tr( X^\top A X) + \langle Y,X^\top B X -
I_r \rangle.
\end{align*}


Before we proceed, we impose the following assumption on the problem.
\begin{assumption}\label{assum-land}
Given a symmetric matrix $A \in \RR^{d \times d}$ and a positive definite matrix $B\in \RR^{d\times d}$, 
the eigenvalues of $\tilde{A}=B^{-\frac{1}{2}} A B^{-\frac{1}{2}}$, denoted by $\lambda^{\tilde{A}}_1,...,\lambda^{\tilde{A}}_d$, satisfy $$\lambda^{\tilde{A}}_1\geq\cdots \geq \lambda^{\tilde{A}}_{r}>\lambda^{\tilde{A}}_{r+1}\geq \cdots \geq \lambda^{\tilde{A}}_d.$$
\end{assumption}

Such an eigengap assumption avoids the identifiability issue. The full rank assumption on $B$ in Assumption~\ref{assum-land} ensures that the original constrained optimization problem is bounded. This assumption can be further relaxed but require more involved analysis. We will discuss this in Appendix~\ref{sec:singular}.

To characterize all equilibria of GEV, we leverage the idea of an invariant group. \cite{li2016symmetry} use similar techniques for an unconstrained matrix factorization problem. However, it does not work for the Lagrangian function due to the more complicate landscape. Therefore, we consider a more general invariant group. Moreover, by analyzing the Hessian matrix of $\cL(X,Y) $ at the equilibria, we demonstrate that each equilibrium is either unstable or stable and the stable equilibria correspond to the global optima of the primal problem~\eqref{eqn:original}.  Therefore, GEV belongs to the class we defined earlier.

%
%

\subsection{Invariant Group and Symmetric Property}

%

We first denote the orthogonal group in dimension $r$ as $$O(r,\RR)=\left\{ \Psi \in \RR^{r \times r} : \Psi \Psi^{\top} = \Psi^{\top} \Psi = I_r \right\}.$$  Notice that for any $\Psi\in O(r,\RR)$, $\cL(X,Y)$ in \eqref{eqn:minimax} has the same landscape with $\cL(X\Psi,\Psi^\top Y \Psi)$. This further indicates that given an equilibrium $(X,Y)$, $(X \Psi, \Psi^\top Y \Psi)$ is also an equilibrium.  This symmetric property motivates us to characterize the equilibria of $\cL(X,Y)$ with an invariant group.

We introduce several important definitions in group theory \citep{dummit2004abstract}.


\begin{definition}\label{def:groupaction}
Given a group $\cH$ and a set $\cX$, a map $\phi(\cdot,\cdot)$ from $\cH \times \cX$ to $\cX$ is called the {\bf group action} of $\cH$ on $\cX$
if $\phi$ satisfies the following two properties:

\noindent{\bf Identity:}
$ \phi(\mathbf{\ds{1}}, x) = x~~\forall x\in \cX$, where $\mathbf{\ds{1}}$ denotes the identity element of $\cH$.

\noindent{\bf Compatibility:}
$\phi(gh, x)=\phi(g,\phi(h,x))~~\forall g,h\in\cH,~~x\in\cX$. 
\end{definition}
 \begin{definition}\label{def:inv_group}
 		Given a function $f(x,y): \cX \times \cY \rightarrow \RR$, a group $\cH$ is a \textbf{\textit{stationary invariant group}} of $f$ with respect to two group actions of $\cH$, $\phi_1$ on $\cX$ and $\phi_2$ on $\cY$, if $\cH$ satisfies 
		\begin{align*}
		 f(x,y) = f(\phi_1(g,x),\phi_2(g,y))~~\forall x \in\cX,~~ y\in\cY, ~~\textrm{and}~~ g\in \cH.
		\end{align*}
\end{definition}
%
For notational simplicity, we denote $\cG=O(r,\RR)$. Given the group $\cG$, two sets $\RR^{d\times r}$ and $\RR^{r\times r}$, we define a group action with $\phi_1$ of $\cG$ on $\RR^{d\times r}$ and a group action $\phi_2$ of $\cG$ on $\RR^{r\times r}$ as
\begin{align*}
\phi_1(\Psi,X)=X \Psi~~\forall \Psi\in \cG,~X\in\RR^{d\times r}\quad\textrm{and}\quad
\phi_2(g,Y)=\Psi^{-1}Y \Psi~~\forall \Psi\in \cG,~Y\in\RR^{r\times r}.
\end{align*} One can check that the orthogonal group $\cG$ is a stationary invariant group of $\cL(X,Y)$ with respect to two group actions of $\cG$, $\phi_1$ on $\RR^{d\times r}$ and $\phi_2$ on $\RR^{r\times r}$. By this invariant group, we define the equivalence relation between $(X_1,Y_1)$ and $(X_2,Y_2)$, if there exists a $\Psi \in \cG$ such that 
\begin{align}\label{eqn-eqv}
(X_1,Y_1)=(X_2\Psi,\Psi^{-1} Y_2 \Psi)=(X_2\Psi,\Psi^\top Y_2 \Psi).
\end{align}

To find all equilibria of GEV, we examine the KKT conditions of $\eqref{eqn:minimax}$:
\begin{align*}
2B X Y - 2 A  X = 0~~\textrm{and}~~X^\top B  X - I_r = 0 \Longrightarrow Y=X^\top A X=:\cD(X).
\end{align*}
Given the eigenvalue decomposition $B=O^{B} \Lambda^{B} O^{B \top}$, we denote 
\begin{align*}
\tilde{A}=(\Lambda^B)^{-\frac{1}{2}} O^{B\top}A O^B (\Lambda^B)^{-\frac{1}{2}}\quad \textrm{and}\quad \tilde{X} =  (\Lambda^B)^{\frac{1}{2}} O^{B\top} X.
\end{align*}
We then consider the eigenvalue decomposition $\tilde{A}=O^{\tilde{A}} \Lambda^{\tilde{A}} O^{\tilde{A} \top}$. The following theorem shows the connection between the equilibrium of $\cL(X,Y)$ and  the column submatrix of $O^{\tilde{A}}$, denoted as $O^{\tilde{A}}_{:,\cI}$, where $$\cI\in \cX^r_d:= \Big\{\{i_1,...,i_r\}:\{i_1,...,i_r\}\subseteq [d]\Big\}$$ is the column index set to determine a column submatrix.
\begin{theorem}[Symmetric Property]\label{thm:stationary}
	Suppose Assumption~\ref{assum-land} holds. Then $(X,\cD(X))$ is an equilibrium of $\cL(X,Y)$, if and only if $X$ can be written as $$X = (O^B(\Lambda^B)^{-\frac{1}{2}} O^{\tilde{A}}_{:,\cI})\cdot\Psi,$$ where index $\cI\in\cX^r_d$ and $\Psi\in \cG$.
	\end{theorem}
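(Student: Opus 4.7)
The plan is to first reduce the equilibrium condition to a clean eigenvector-type equation in a transformed coordinate system, and then exploit the orthogonal symmetry to absorb all remaining ambiguity into an element of $\cG$. Since the KKT system worked out in the excerpt already gives $Y=\cD(X)=X^\top A X$ at any equilibrium, what remains is to characterize those $X$ satisfying
\begin{align*}
B X \, X^\top A X = A X \quad \text{and} \quad X^\top B X = I_r.
\end{align*}
Introducing $\tilde X = (\Lambda^B)^{\frac12}O^{B\top} X$ (so that $X = O^B(\Lambda^B)^{-\frac12}\tilde X$) and using $B = O^B\Lambda^B O^{B\top}$, the first identity becomes $\tilde A \tilde X = \tilde X M$ with $M:=\tilde X^\top \tilde A \tilde X$, while the second becomes $\tilde X^\top\tilde X = I_r$. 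Thus proving the theorem reduces to showing: the orthonormal $r$-frames $\tilde X$ with $\tilde A \tilde X = \tilde X M$ are exactly $\tilde X = O^{\tilde A}_{:,\cI}\Psi$ for some $\cI\in\cX^r_d$ and $\Psi\in\cG$.

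For the \emph{if} direction I would simply plug $X = O^B(\Lambda^B)^{-\frac12}O^{\tilde A}_{:,\cI}\Psi$ into the two equations above; using $\Psi\in\cG$, $O^{\tilde A \top}O^{\tilde A}=I_d$, and $\tilde A O^{\tilde A}_{:,\cI} = O^{\tilde A}_{:,\cI}\Lambda^{\tilde A}_{\cI,\cI}$, both identities fall out by direct computation. The \emph{only if} direction is the substantive step. Starting from $\tilde A\tilde X = \tilde X M$, note that $M = \tilde X^\top \tilde A \tilde X$ is symmetric, so diagonalize $M = \Psi_0 D \Psi_0^\top$ with $\Psi_0\in\cG$ and $D=\diag(d_1,\dots,d_r)$. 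Setting $V:=\tilde X \Psi_0$ gives $\tilde A V = V D$ and $V^\top V = I_r$, so the columns of $V$ are orthonormal eigenvectors of $\tilde A$ with eigenvalues $d_1,\dots,d_r$, which therefore belong to $\{\lambda^{\tilde A}_1,\dots,\lambda^{\tilde A}_d\}$. Hence there exists an index set $\cI\in\cX^r_d$ (together with an ordering) such that the columns of $V$ lie in the spectral subspace spanned by $O^{\tilde A}_{:,\cI}$, which yields $V = O^{\tilde A}_{:,\cI}\,R$ for some $R\in O(r,\RR)$. Setting $\Psi = R\Psi_0^\top\in\cG$ then gives $\tilde X = O^{\tilde A}_{:,\cI}\Psi$, and mapping back via $X = O^B(\Lambda^B)^{-\frac12}\tilde X$ completes the argument.

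The main obstacle is the case of repeated eigenvalues of $\tilde A$, which Assumption~\ref{assum-land} explicitly allows within $\{\lambda^{\tilde A}_1,\dots,\lambda^{\tilde A}_r\}$ and within $\{\lambda^{\tilde A}_{r+1},\dots,\lambda^{\tilde A}_d\}$. In that case the eigendecomposition $O^{\tilde A}$ is not unique on each degenerate block, so the step ``columns of $V$ are orthonormal eigenvectors $\Rightarrow V = O^{\tilde A}_{:,\cI}R$'' needs to be justified at the level of eigenspaces rather than individual eigenvectors: for each distinct eigenvalue appearing among $d_1,\dots,d_r$, the corresponding columns of $V$ form an orthonormal set inside a specific eigenspace of $\tilde A$, and any two orthonormal bases of that eigenspace differ by a block-orthogonal change of basis. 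Collecting these block transformations together with a column permutation produces the required $R\in O(r,\RR)$, after which the orthogonal group $\cG$ absorbs everything. Once this is handled carefully, the rest of the proof is mechanical; the invariant-group framework of Section~\ref{GEV} is what makes the final packaging clean.
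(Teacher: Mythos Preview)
Your argument is correct and takes a genuinely different route from the paper's. For the \emph{if} direction you and the paper do the same thing: plug in and verify. For the \emph{only if} direction, however, the paper argues by contrapositive: it assumes $\tilde X$ is \emph{not} of the form $O^{\tilde A}_{:,\cI}\Psi$, parametrizes the deviation as $\tilde X = [O^{\tilde A}_{:,\cS},\,\phi]\Psi$ with $\phi = c_1 O^{\tilde A}_{:,i}+c_2 O^{\tilde A}_{:,j}$ a nontrivial mixture of two eigendirections outside $\cS$, and then computes $\nabla_X\cL$ explicitly to show it is nonzero, claiming the general case ``can be induced from this basic setting.'' Your approach instead works forward: diagonalize the symmetric matrix $M=\tilde X^\top\tilde A\tilde X$ as $M=\Psi_0 D\Psi_0^\top$, set $V=\tilde X\Psi_0$, and read off directly that the columns of $V$ are orthonormal eigenvectors of $\tilde A$. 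This is cleaner and more standard---it is essentially the classical characterization of critical points of the trace functional on the Stiefel manifold---and it avoids the case analysis the paper only sketches.

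One caveat on your repeated-eigenvalue paragraph: the fix you propose, ``any two orthonormal bases of that eigenspace differ by a block-orthogonal change of basis,'' only bites when the columns of $V$ falling in a given eigenspace span the \emph{entire} eigenspace. If they span a proper subspace (e.g.\ a single column of $V$ lies in a two-dimensional eigenspace of $\tilde A$), then for a \emph{fixed} choice of $O^{\tilde A}$ there is in general no $\cI$ and $R\in O(r,\RR)$ with $V=O^{\tilde A}_{:,\cI}R$, and the theorem as literally stated does not hold. The paper's own computation quietly uses $\lambda^{\tilde A}_i\neq\lambda^{\tilde A}_j$ for $i\neq j$ at exactly the same spot, so this is a shared ambiguity in the statement (resolved by either assuming simple spectrum or allowing $O^{\tilde A}$ to be chosen after $X$) rather than a flaw particular to your argument.
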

The proof of Theorem~\ref{thm:stationary} is provided in Appendix~\ref{pf:thm:stationary}. Theorem~\ref{thm:stationary} implies that  there are $\binom{d}{r}$ equilibria of $\cL(X,Y)$ under the equivalence relation given in \eqref{eqn-eqv}. Each of them corresponds to an $O^{\tilde{A}}_{:,\cI}$, where $\cI\in \cX^r_d $ is  the index set. Then whole equilibria set is generated by these $O^{\tilde{A}}_{:,\cI}$ with the transformation matrix $O^B(\Lambda^B)^{-\frac{1}{2}}$ and the invariant group action induced by $\cG$. 
%
%

\subsection{Unstable Equilibrium vs. Stable Equilibrium}

We further identify the stable and unstable equilibria. Specifically, given $(X,Y)$ as an equilibrium of $\cL(X,Y)$, we denote the Hessian matrix of $\cL(X,Y)$ with respect to the primal variable $X$ as 
\begin{align*}
H_X \triangleq \nabla_X^2 \cL(X,Y)|_{Y=\cD(X)}\in \RR^{dr\times dr}.
\end{align*} Then we calculate the eigenvalues of $H_X$. By Definition~\ref{def:crit-point}, $(X,\cD(X))$ is unstable if $H_X$ has a negative eigenvalue; Otherwise, we analyze the local landscape at $(X,\cD(X))$ to determine whether it is stable or not.
The following theorem shows that all equilibria are either stable or unstable and demonstrates how the choice of index set $\cI$ corresponds to the unstable and stable equilibria of $\cL(X,Y)$.
\begin{theorem}\label{lem:invertB_saddle}
	Suppose Assumption~\ref{assum-land} holds, and $(X,\cD(X))$ is an equilibrium in \eqref{eqn:minimax}. By Theorem~\ref{thm:stationary}, $X$ can be represented as $X = (O^B(\Lambda^B)^{-\frac{1}{2}} O^{\tilde{A}}_{:,\cI})\cdot \Psi$ for some $\Psi\in \cG $ and $\cI\in\cX^r_d$. 
	
If $\cI \neq [r]$, then $(X,\cD(X))$ is an unstable equilibrium with
	\begin{align*}
	\lambda_{\min} (H_X) \leq \frac{2(\lambda^{\tilde{A}}_{\max \cI} - \lambda^{\tilde{A}}_{\min \cI^{\perp}})}{\| X_{:,\min \cI^{\perp}}\|_2^2}<0,
	\end{align*}
	where $\lambda^{\tilde{A}}_{\max \cI}=\max_{i\in\cI}\lambda^{\tilde{A}}_i$, and $\lambda^{\tilde{A}}_{\min \cI}=\min_{i\in\cI}\lambda^{\tilde{A}}_i$,  $\lambda^{\tilde{A}}_i$ is the $i$-th leading eigenvalue of $\tilde{A}.$
	
	Otherwise, we have $H_X \succeq 0~\textrm{and}~\rk(H_X)=d\times r -r(r-1)/2.$ Moreover, $(X,\cD(X))$ is a stable equilibrium of min-max problem~\eqref{eqn:minimax}.
\end{theorem}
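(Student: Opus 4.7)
I plan to compute $H_X$ explicitly at the equilibrium, diagonalize it via the simultaneous generalized eigenstructure of $(A,B)$, and then read off its spectrum as a function of $\cI$. By Theorem~\ref{thm:stationary} and the $\cG$-invariance of the Lagrangian, it suffices to verify the claim at the canonical representative $\Psi=I_r$, so that $X=O^B(\Lambda^B)^{-1/2}O^{\tilde A}_{:,\cI}$ and $Y=\cD(X)=\diag(\lambda^{\tilde A}_{i_1},\ldots,\lambda^{\tilde A}_{i_r})$ with $\cI=\{i_1,\ldots,i_r\}$. Signatures, ranks, and null spaces are $\cG$-invariant and will propagate to the full orbit.

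\textbf{Diagonalization of $H_X$.} Differentiating $\cL$ twice in $X$ and substituting $Y=\cD(X)$ yields the bilinear form $\langle U,H_XU\rangle=-2\tr(U^\top AU)+2\tr(\cD(X)U^\top BU)$ for $U\in\RR^{d\times r}$. I would then apply the successive substitutions $V=(\Lambda^B)^{1/2}O^{B\top}U$ and $W=O^{\tilde A\top}V$, which simultaneously diagonalize the two quadratic pieces, producing
\begin{align*}
\langle U,H_XU\rangle=2\sum_{i\in[d],\,j\in[r]}\bigl(\lambda^{\tilde A}_{i_j}-\lambda^{\tilde A}_i\bigr)W_{ij}^2.
\end{align*}
This diagonal form reduces the entire spectral analysis to sign-tracking of eigenvalue differences indexed by $\cI$.

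\textbf{Unstable case $\cI\neq[r]$.} Since $|\cI|=r$ and $\cI\neq[r]$, necessarily $\min\cI^\perp\le r$ and $\max\cI>r$. Setting $i_\star:=\min\cI^\perp$ and choosing $j_\star\in[r]$ so that $i_{j_\star}=\max\cI$, the eigengap assumption yields $\lambda^{\tilde A}_{i_\star}\geq\lambda^{\tilde A}_r>\lambda^{\tilde A}_{r+1}\geq\lambda^{\tilde A}_{i_{j_\star}}$. Taking $W=e_{i_\star}e_{j_\star}^\top$ produces $\langle U,H_XU\rangle=2(\lambda^{\tilde A}_{i_{j_\star}}-\lambda^{\tilde A}_{i_\star})<0$. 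Undoing the change of variables gives $U=O^B(\Lambda^B)^{-1/2}O^{\tilde A}_{:,i_\star}e_{j_\star}^\top$, whose Frobenius norm squared coincides with $\|X_{:,\min\cI^\perp}\|_2^2$ under the natural identification (the $i_\star$-column of the full $O^B(\Lambda^B)^{-1/2}O^{\tilde A}$ is what $X_{:,\min\cI^\perp}$ denotes). Dividing the negative quadratic value by this norm delivers the stated upper bound on $\lambda_{\min}(H_X)$.

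\textbf{Stable case $\cI=[r]$ and main obstacle.} For $\cI=[r]$, the coefficient $\lambda^{\tilde A}_j-\lambda^{\tilde A}_i$ is strictly positive for every $i>r$, so the $(d-r)r$ coordinates $\{W_{ij}:i>r\}$ constitute a positive-definite block. The remaining $r^2$ coordinates $\{W_{ij}:i,j\in[r]\}$ are subtle: the diagonal ($i=j$) contributes zero, while the off-diagonal pairs $(i,j)$ and $(j,i)$ with $i\neq j$ must be combined via the $\cG$-symmetry. Decomposing $W_{[r],[r]}$ into its symmetric and skew-symmetric parts, the skew-symmetric piece corresponds exactly to the infinitesimal action of $\cG=O(r,\RR)$ on $X$ and therefore lies in $\ker H_X$ by invariance of $\cL$ along the orbit, while the symmetric piece contributes nonnegatively after matching transpose partners. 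Combined with the positive block from $i>r$, this gives $H_X\succeq 0$ and $\rk H_X=dr-r(r-1)/2$. Global optimality follows because $-\tr(X^\top AX)=-\sum_{k=1}^r\lambda^{\tilde A}_k$ is the minimum of the primal objective. \emph{The hardest step is the pairing/symmetrization} in the stable case: the bare diagonal form is not manifestly PSD on the $(i,j)\in[r]\times[r]$ block, and matching each coordinate with its transpose partner through the $\cG$-action is the key technical maneuver that both enforces positive semidefiniteness and produces the $r(r-1)/2$-dimensional null space.
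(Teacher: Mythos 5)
Your diagonalization and the unstable case are clean, but the stable case has a genuine gap rooted in which Hessian you are computing. You write $\langle U,H_XU\rangle=-2\tr(U^\top AU)+2\tr(\cD(X)U^\top BU)$, i.e.\ the partial Hessian $\nabla_X^2\cL(X,Y)$ with $Y$ held fixed at $\cD(X)$, which vectorizes to $2(\cD(X)\otimes B-I_r\otimes A)$. The paper's $H_X$ in equation~\eqref{eqn:hessian} is the Hessian of the \emph{composite} map $X\mapsto\cL(X,\cD(X))$: it carries the additional term $2\sym\bigl((AX)\boxtimes(BX)\bigr)$ coming from differentiating $\cD(X)=X^\top AX$ through the dual slot. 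That extra term is not cosmetic. With your formula and $\cI=[r]$, the $[r]\times[r]$ block of your diagonal form is $2\sum_{i,j\in[r]}(\lambda^{\tilde A}_j-\lambda^{\tilde A}_i)W_{ij}^2$, and for $r\geq 2$ with $\lambda^{\tilde A}_1>\lambda^{\tilde A}_2$ the choice $W=e_1e_2^\top$ (i.e.\ $U=B^{-1/2}O^{\tilde A}_{:,1}e_2^\top$) gives value $2(\lambda^{\tilde A}_2-\lambda^{\tilde A}_1)<0$. So your $H_X$ is in fact indefinite at $\cI=[r]$, not PSD, and the rank cannot equal $dr-r(r-1)/2$. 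Equivalently, $\lambda_jB-A$ is not PSD for $j\geq 2$, and your $H_X$ at $\cI=[r]$ is block-diagonal with those blocks.

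Your symmetric/skew rescue does not repair this. Under your quadratic form the symmetric part of $W_{[r],[r]}$ also annihilates the form (it does not ``contribute nonnegatively''): writing $W=S+K$, both $\sum(\lambda_j-\lambda_i)S_{ij}^2$ and $\sum(\lambda_j-\lambda_i)K_{ij}^2$ vanish by antisymmetry of the coefficient, and the surviving piece is the cross term $4\sum_{i<j}(\lambda^{\tilde A}_j-\lambda^{\tilde A}_i)S_{ij}K_{ij}$, which has no fixed sign. What actually saves the stable case in the paper is the $\boxtimes$ contribution: carrying it through your change of variables, the $[r]\times[r]$ block becomes $\sum_{i,j\in[r]}(\lambda^{\tilde A}_i+\lambda^{\tilde A}_j)(Z_{ij}+Z_{ji})^2$ rather than $2\sum(\lambda^{\tilde A}_j-\lambda^{\tilde A}_i)W_{ij}^2$; this is manifestly PSD (with the implicit sign assumption on the top-$r$ eigenvalues), vanishes exactly on skew-symmetric $Z_{[r],[r]}$, and produces the advertised $r(r-1)/2$-dimensional kernel. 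So the fix is to compute $H_X$ with the third term present and redo the diagonalization; the unstable case survives essentially unchanged because your negative-curvature direction lives in a coordinate $Z_{i_\star j_\star}$ with $i_\star\notin\cI$, which the $\boxtimes$ term does not touch (it only couples rows indexed by $\cI$).

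As a minor point, your ``undoing the change of variables'' comment is right in spirit but the theorem's denominator notation $\|X_{:,\min\cI^\perp}\|_2$ is a misnomer in the source (since $X$ has columns indexed by $[r]$, not by $\cI^\perp$); the paper's proof and your calculation both really use $\|B^{-1/2}O^{\tilde A}_{:,\min\cI^\perp}\|_2$, and you flagged this correctly.
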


The proof of Theorem~\ref{lem:invertB_saddle} is provided in Appendix~\ref{pf:lem:invertB_saddle}. Theorem~\ref{lem:invertB_saddle} indicates that when $\tilde{X}=O^{\tilde{A}}_{:,[r]}$, that is, the eigenvectors of $\tilde{A}$ corresponding to the $r$ largest eigenvalues, $(X,\cD(X))$ is a stable equilibrium of $\cL(X,Y)$, where $X = (O^B(\Lambda^B)^{-\frac{1}{2}} O^{\tilde{A}}_{:,\cI}))\cdot \Psi~\textrm{for~some}~\Psi\in \cG.$ Although $H_X$ is degenerate at this equilibrium, all directions in $\textrm{Null}(H_X)$ essentially point to the primal variables of other stable equilibria. Excluding these directions, the rest all have positive curvature, which implies that this equilibrium is stable. Moreover, such an $X$ corresponds to the optima of \eqref{eqn:original}. When $\cI\neq [r]$, due to the negative curvature, these equilibria are unstable. Therefore, all stable equilibria of $\cL(X,Y)$ correspond to the global optima in $\eqref{eqn:original}$ and other equilibria are unstable, which further indicates that GEV belongs to the class we defined earlier.

\section{Stochastic Search for Online GEV}

For GEV, we propose a fully stochastic primal-dual algorithm to solve \eqref{eqn:minimax}, which only requires access to the stochastic approximations of $A$ and $B$ matrices. This is very different from other existing semi-stochastic algorithms that require to access the exact $B$ matrix~\citep{ge2016efficient}. Specifically, we propose a stochastic variant of the generalized Hebbian algorithm (GHA), also referred as Sanger's rule in existing literature \citep{sanger1989optimal}, to solve \eqref{eqn:minimax}. For online setting, accessing the exact $A$ and $B$ is prohibitive and we only get $A^{(k)}\in\RR^{d \times d}$ and $B^{(k)}\in\RR^{d \times d}$ that are independently sampled from the distribution associated with $A$ and $B$ at the $k$-th iteration. Our proposed SGHA updates primal and dual variables as follows:
\begin{align}
 \text{Primal Update: } &X^{(k+1)} \gets X^{(k)}-\eta\cdot \hspace{-0.25in}\underbrace{\left(B^{(k)} X^{(k)} Y^{(k)} -  A^{(k)} X^{(k)} \right),}_{\textrm{Stochastic Approximation of $\nabla_{X}\cL(X^{(k)},Y^{(k)})$}} \label{alg:primal_update}\\
 \text{Dual Update: }
&Y^{(k+1)}  \gets \hspace{-0.45in}\underbrace{X^{(k)\top}  A^{(k)} X^{(k)},}_{\textrm{Stochastic Approximation of $X^{(k)\top}AX^{(k)}$}} \label{alg:dual_update}
\end{align}
where $\eta >0$ is a step size parameter.  Note that the primal update is a stochastic gradient descent step, while the dual update is motivated by the KKT conditions of \eqref{eqn:minimax}. SGHA is simple and easy to implement. The constraint is naturally handled by the dual update. Further, motivated by the the landscape of GEV, we analyze the algorithm by diffusion approximations and obtain the asymptotical sample complexity.


\subsection{Numerical Evaluations}
We first provide numerical evaluations to illustrate the effectiveness of SGHA, and then provide an asymptotic convergence analysis of SGHA. We choose $d=500$ and select three different settings: 

\begin{itemize}[leftmargin=*]
\item {$\boldsymbol{\textrm{\bf Setting} (1): }$}  $\eta=10^{-4}$, $r=1$, $A_{ii} = 1/100$~~$\forall i \in [d]$, $A_{ij} = 0.5/10$ and $B_{ij}=0.5^{|i-j|}/3$~~$\forall i\neq j$;
\item {$\boldsymbol{\textrm{\bf Setting} (2): }$} $\eta=5\times 10^{-5}$, $r=3$, and randomly generate an orthogonal matrix $U\in\RR^{d\times d}$ such that $A= U\cdot\diag(1,1,1,0.1,...,0.1)\cdot U^\top$ and $B=U\cdot\diag(2,2,2,1,...,1)\cdot U^\top$;
\item {$\boldsymbol{\textrm{\bf Setting} (3): }$} $\eta=2.5\times 10^{-5}$, $r=3$, and randomly generate two orthogonal matrices $U,V\in\RR^{d\times d}$ such that $A= U\cdot\diag(1,1,1,0.1,...,0.1)\cdot U^\top$ and $B=V\cdot \diag(2,2,2,1,...,1)\cdot V^\top$.
\end{itemize}
%

\begin{figure}
	\centering
	\subfigure[Setting (1)]{
		\includegraphics[width=0.3\textwidth]{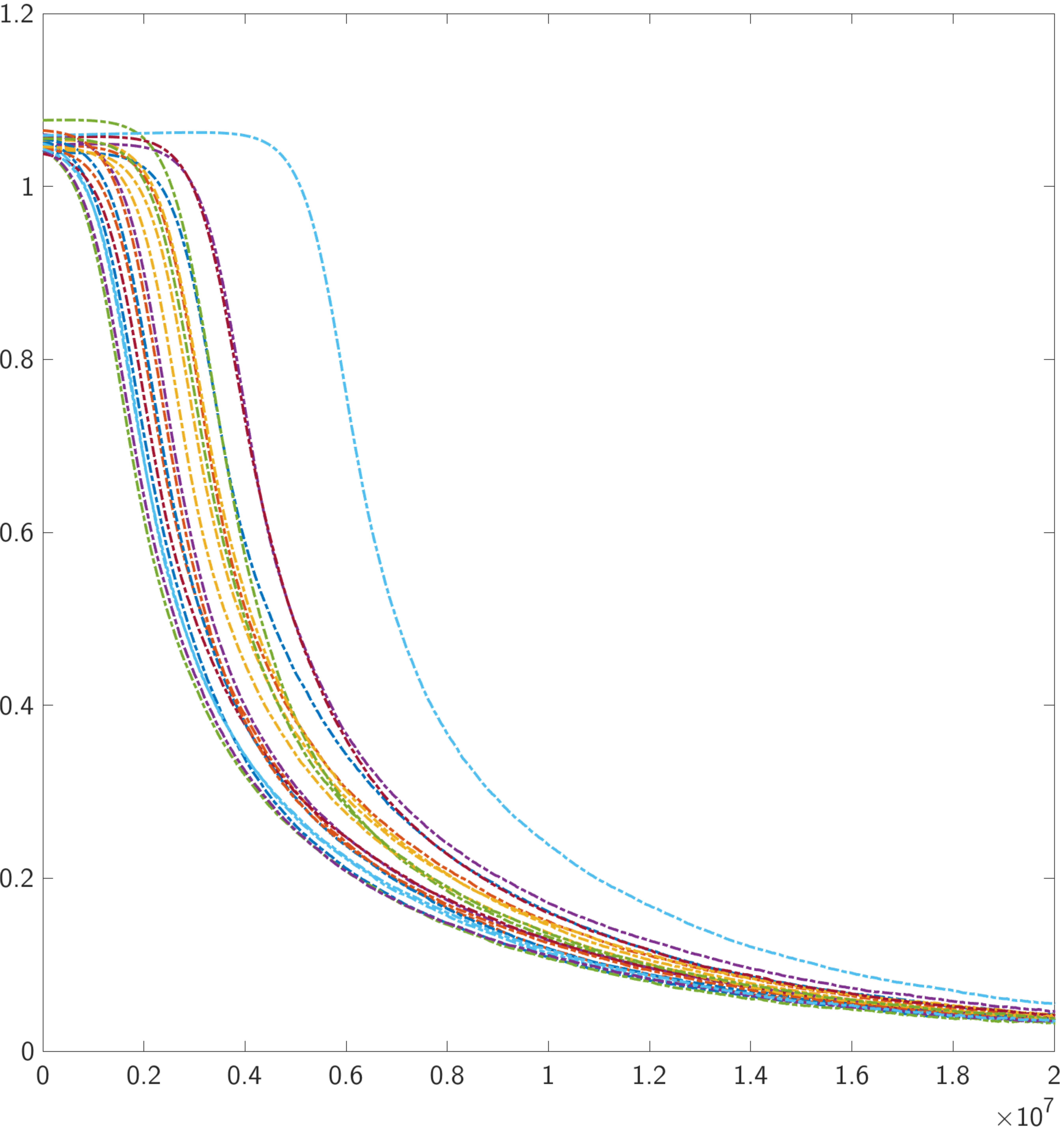}\label{setting-1}
	}
	\subfigure[Setting (2)]{
		\includegraphics[width=0.3\textwidth]{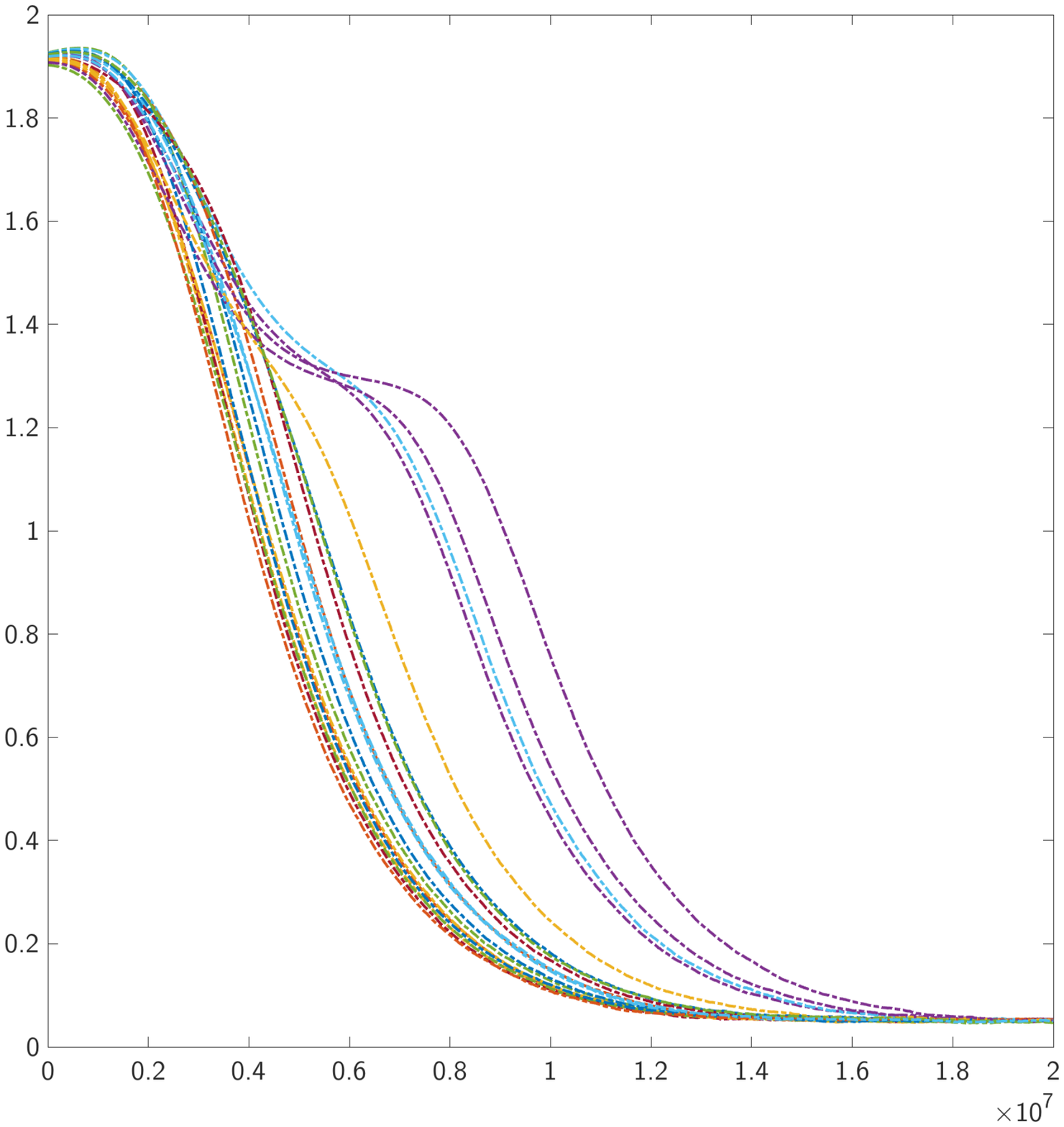}\label{setting-2}
	}
	\subfigure[Setting (3)]{
		\includegraphics[width=0.3\textwidth]{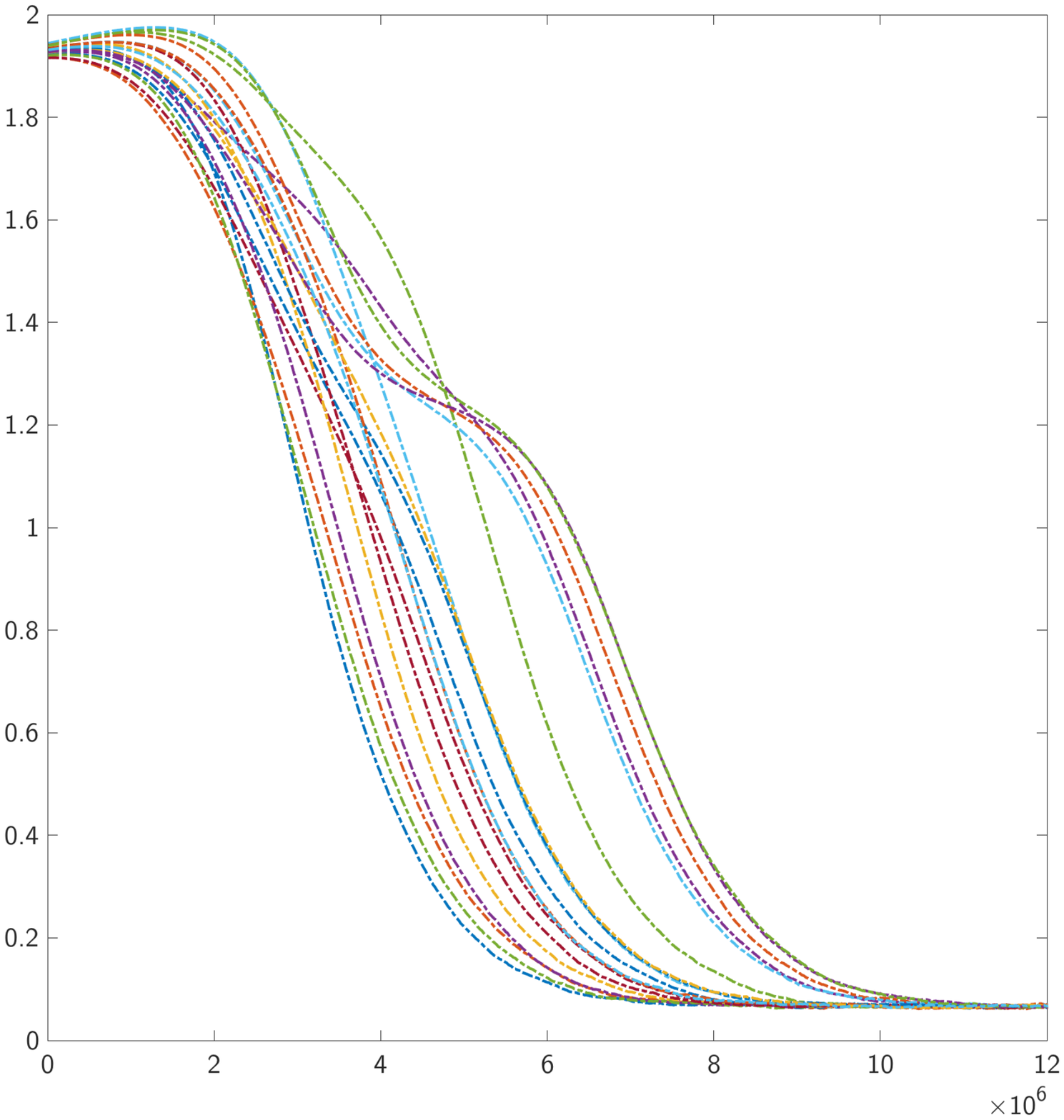}\label{setting-3}
	}
	\caption{Plots of the optimization error $\norm{B^{1/2}X^{(t)}X^{(t)\top}B^{1/2}-B^{1/2}X^{*}X^{*\top}B^{1/2}}_{\rm F}$ over SGHA iterations on synthetic data of 20 random data generations under different settings of parameters.}
\label{numerical-results}
\end{figure}

At the $k$-th iteration of SGHA, we independently sample $40$ random vectors from $N(0,A)$ and $N(0,B)$ respectively. Accordingly, we compute the sample covariance matrices $A^{(k)}$ and $B^{(k)}$ as the approximations of $A$ and $B$. We repeat numerical simulations under each setting for $20$ times using random data generations, and present all results in Figure \ref{numerical-results}. The horizontal axis corresponds to the number of iterations, and the vertical axis corresponds to the optimization error $$\norm{B^{1/2}X^{(t)}X^{(t)\top}B^{1/2}-B^{1/2}X^{*}X^{*\top}B^{1/2}}_{\rm F}.$$ Our experiments indicate that SGHA converges to a global optimum in all settings.


\subsection{Convergence Analysis for Commutative $A$ and $B$}

As a special case, we first prove the convergence of SGHA for GEV with $r=1$, and $A$ and $B$ are commutative. We will discuss more on noncommutative cases and $r>1$ in the next section. Before we proceed, we introduce our assumptions on the problem.
\begin{assumption}\label{assmp}
We assume that the following conditions hold:
\begin{itemize}[leftmargin=*]
\item {\bf (a):} $A^{(k)}$'s and $B^{(k)}$'s are independently sampled from two different distributions $\cD_A$ and $\cD_B$ respectively, where $\EE A^{(k)} = A$ and $\EE B^{(k)}=B \succ 0$; 
\item{\bf(b):} $A$ and $B$ are commutative, i.e., there exists an orthogonal matrix $O$ such that $A = O\Lambda^{A}O^\top$ and $B = O\Lambda^{B}O^\top$, where $\Lambda^A=\diag(\lambda_1,...,\lambda_d)$ and $\Lambda^B=\diag(\mu_1,...,\mu_d)$ are diagonal matrices with $\lambda_j\neq 0$; 
\item {\bf(c):} $A^{(k)}$ and $B^{(k)}$ satisfy the moment conditions, that is, for some generic constants $C_0$ and $C_1$, $\EE {\norm{A^{(k)}}_2^2}\leq C_0$ and $\EE {\norm{B^{(k)}}_2^2}\leq C_1$.
\end{itemize}
\end{assumption}
Note that (a) and (c) in \eqref{assmp} are mild, but (b) is stringent. For convenience of analysis, we combine \eqref{alg:primal_update} and \eqref{alg:dual_update} as
\begin{align}\label{update}
 X^{(k+1)} \gets  X^{(k)} - \eta \big(B^{(k)} X^{(k)} X^{(k)\top} \hspace{-0.035in}- I_d \big) A^{(k)} X^{(k)}.
\end{align}
We remark that \eqref{update} is very different from existing optimization algorithms over the generalized Stiefel manifold. Specifically, computing the gradient over the generalized Stiefel manifold requires $B^{-1}$, which is not allowed in our setting. For notational convenience, we further denote
\begin{align*}
\Lambda=(\Lambda^B)^{-\frac{1}{2}} \Lambda^A (\Lambda^B)^{-\frac{1}{2}}=\diag\left(\frac{\lambda_1}{\mu_1},...,\frac{\lambda_d}{\mu_d}\right)=:\diag(\beta_1,\cdots,\beta_d).
\end{align*} 
Without loss of generality, we assume $ \beta_1>\beta_2 \geq \beta_3 \geq\cdots\geq\beta_d,$ and $\beta_i\neq 0~~\forall i\in[d]$. Note that $\mu_i$ and $\lambda_i$, however, are not necessarily to be monotonic. We denote 
\begin{align*}
\mu_{\min}=\min_{i \neq 1} \mu_i,\quad \mu_{\max}=\max_{i\neq 1} \mu_i,\quad \textrm{and}\quad {\sf gap}=\beta_1-\beta_2.
\end{align*}
 Denote $W^{(k)}=(\Lambda^{B})^\frac{1}{2}O X^{(k)}$.  One can verify that \eqref{update} can be rewritten as follows:
\begin{align}\label{equivalent}
 W^{(k+1)}&\gets W^{(k)}-\eta\Big((\Lambda^B)^{\frac{1}{2}}\hat{\Lambda}_B^{(k)}(\Lambda^B)^{-\frac{1}{2}}\cdot W^{(k)}W^{(k)\top} - \Lambda^B \Big) \cdot\tilde{\Lambda}^{(k)}W^{(k)},
\end{align}
where $\hat{\Lambda}_B^{(k)}= O^{\top} B^{(k)}O$ and $\tilde{\Lambda}^{(k)}=O^{\top} B^{-\frac{1}{2}} A^{(k)}B^{-\frac{1}{2}}O.$ Note that $W^*=(1,\underbrace{0,0,...,0}_{(d-1)})^\top$ corresponds to the optimal solution of \eqref{eqn:original}.

By diffusion approximation, we show that our algorithm converges through three Phases: 
\begin{itemize}[leftmargin=*]
\item{\bf Phase I:} Given an initial near a saddle point, we show that after rescaling of time properly, the algorithm can be characterized by a stochastic differential equation (SDE). Such an SDE further implies our algorithm can escape from the saddle fast; 
\item{\bf Phase II:}  We show that away from the saddle, the trajectory of our algorithm can be approximated by an ordinary differential equation (ODE); 
\item{\bf Phase III:} We first show that after Phase II, the norm of solution converges to a constant. Then, the algorithm can be characterized by an SDE, like Phase I. By the SDE, we analyze the error fluctuation when the solution is within a small neighborhood of the global optimum. 
\end{itemize}
Overall, we obtain an asymptotic sample complexity.

\noindent{\bf ODE Characterization:} To demonstrate an ODE characterization for the trajectory of our algorithm, we introduce a continuous time random process 
\begin{align*}
w^{(\eta)}(t):=W^{(k)},
\end{align*}
where $k=\lfloor \frac{t}{\eta}\rfloor$ and $\eta$ is the step size in \eqref{update}. For notational simplicity, we drop $(t)$ when it is clear from the context. Instead of showing a global convergence of $w^{(\eta)}$, we show that the quantity 
\begin{align*}
v^{(\eta)}_{i,j}=\frac{(w^{(\eta)}_i)^{\mu_j}}{(w^{(\eta)}_j)^{\mu_i}}
\end{align*} converges to an exponential decay function, where $v^{(\eta)}_i$ is the $i$-th component (coordinate) of $w^{(\eta)}$.

\begin{lemma}\label{lem:ode}
	Suppose that Assumption~\ref{assmp} holds and the initial solution is away from any saddle point, i.e., given pre-specified constants, $\tau>0$ and $\delta<\frac{1}{2}$, there exist $i,j$ such that 
\begin{align*}
i\neq j, \quad |w^{(\eta)}_j|> \tau, \quad \textrm{and}\quad |w^{(\eta)}_i|>\eta^{\frac{1}{2}+\delta}.
\end{align*}
 As $\eta \rightarrow 0$, $v^{(\eta)}_{k,j}$ weakly converges to the solution of the following ODE:
\begin{align}\label{equ:ode}
	d x_{k,j} = x_{k,j} \cdot \left( \mu_j\mu_k ( \beta_k-\beta_j)\right)d t~~\forall k\neq j.
\end{align}
\end{lemma}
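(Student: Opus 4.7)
The plan is to invoke the standard weak-convergence framework for stochastic approximations (Kushner--Yin type) applied to the piecewise-constant interpolation of $v^{(\eta)}_{k,j}$. The heart of the argument is the algebraic cancellation that $v^{(\eta)}_{k,j}=(w^{(\eta)}_k)^{\mu_j}/(w^{(\eta)}_j)^{\mu_k}$ was designed to produce: although every coordinate $w^{(\eta)}_\ell$ has a drift involving the unknown, state-dependent nonlinear quantity $S^{(k)}:=\sum_m\beta_m(w^{(k)}_m)^2$, this term is identical across $\ell$, so it cancels out of the drift of $\log v^{(\eta)}_{k,j}$, leaving the purely deterministic expression $\mu_j\mu_k(\beta_k-\beta_j)$ that matches \eqref{equ:ode}.

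I would first compute the conditional one-step mean of each coordinate from \eqref{equivalent}. Using $\EE\hat{\Lambda}^{(k)}_B=\Lambda^B$ and $\EE\tilde{\Lambda}^{(k)}=\Lambda$, both diagonal, and the diagonal structure of $\Lambda^B$, a direct expansion yields
\begin{align*}
\EE\bigl[w^{(k+1)}_\ell-w^{(k)}_\ell \mid W^{(k)}\bigr] = -\eta\,\mu_\ell w^{(k)}_\ell\bigl(S^{(k)}-\beta_\ell\bigr) + O(\eta^2).
\end{align*}
Next, expand $\log v^{(\eta)}_{k,j}=\mu_j\log|w^{(\eta)}_k|-\mu_k\log|w^{(\eta)}_j|$ via the discrete Taylor identity $\log(w+\Delta)=\log w+\Delta/w-\Delta^2/(2w^2)+\ldots$ Substituting the drift above, the coefficient of $S^{(k)}$ in $\EE[\Delta\log v^{(\eta)}_{k,j}\mid W^{(k)}]$ is $-\eta\mu_j\mu_k+\eta\mu_k\mu_j=0$, so the $S^{(k)}$ contribution vanishes identically, and what remains reduces to $\eta\mu_j\mu_k(\beta_k-\beta_j)$, i.e.\ exactly \eqref{equ:ode} written in logarithmic form.

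For the martingale part, Assumption~\ref{assmp}(c) yields $\EE[|w^{(k+1)}_\ell-w^{(k)}_\ell|^2\mid W^{(k)}]=O(\eta^2)$. Under the hypothesis $|w^{(\eta)}_j|>\tau$ and $|w^{(\eta)}_k|>\eta^{1/2+\delta}$, the division by $w^{(\eta)}_\ell$ inflates the per-step increment of $\log v^{(\eta)}_{k,j}$ by a controlled factor, and Doob's maximal inequality applied to the accumulated quadratic variation shows that the martingale component of $\log v^{(\eta)}_{k,j}$ goes to zero uniformly on $[0,T]$ in probability as $\eta\to 0$. Combined with the drift computation, this delivers: (i)~tightness of $\{v^{(\eta)}_{k,j}\}_{\eta>0}$ in $D([0,T],\RR)$ from the uniform boundedness of drift and increments; (ii)~identification of every subsequential weak limit as the unique solution of the linear ODE \eqref{equ:ode}. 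These are the two ingredients required for weak convergence.

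The main technical obstacle is that the hypothesis $|w_j|>\tau$ and $|w_k|>\eta^{1/2+\delta}$ is imposed only at the initial time of Phase~II, whereas the variance bounds require these lower bounds to persist along the whole trajectory. I would handle this with a stopping-time argument: define $\sigma^{(\eta)}:=\inf\bigl\{t:|w^{(\eta)}_j(t)|\le\tau/2\text{ or }|w^{(\eta)}_k(t)|\le\tfrac{1}{2}\eta^{1/2+\delta}\bigr\}$, establish weak convergence of the stopped process $v^{(\eta)}_{k,j}(\cdot\wedge\sigma^{(\eta)})$ to the ODE solution by the preceding analysis, and then use the fact that on any fixed finite interval $[0,T]$ the (monotone-exponential) ODE solution stays uniformly away from zero at a level independent of $\eta$, while the stopping threshold $\tfrac{1}{2}\eta^{1/2+\delta}$ tends to zero; this forces $\sigma^{(\eta)}>T$ with probability tending to one and thereby yields the unstopped weak-convergence conclusion.
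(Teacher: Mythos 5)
Your proposal follows essentially the same route as the paper's proof: compute the one-step conditional drift of the coordinates, observe that the state-dependent nonlinear term $S^{(k)}=\sum_m\beta_m(w^{(k)}_m)^2$ is common to all coordinates and therefore cancels in the drift of the ratio $v^{(\eta)}_{k,j}$ (the paper does this via a Taylor expansion of the ratio itself; you do it via $\log v^{(\eta)}_{k,j}$, which is algebraically equivalent), show the infinitesimal conditional variance vanishes as $\eta\to0$, and invoke a standard weak-convergence-of-Markov-chains result (the paper cites Ethier--Kurtz Ch.~7 \S4; you spell out tightness plus limit identification in the Kushner--Yin style, which is the same theorem in different packaging).

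The one substantive place where you go beyond the paper is the stopping-time argument. The paper's hypothesis bounds $|w^{(\eta)}_j|$ and $|w^{(\eta)}_i|$ from below only at the initial time, yet its variance bound implicitly divides by $(w^{(\eta)}_i)^2$ along the whole trajectory; the paper dispatches this by asserting $\norm{w^{(\eta)}(t)}_2$ is bounded on $[0,T]$ (an upper bound), which does not by itself control the denominator from below. Your stopping time $\sigma^{(\eta)}$, together with the observation that the limiting exponential ODE solution on a fixed $[0,T]$ stays bounded away from zero at a level independent of $\eta$ while the stopping threshold $\tfrac12\eta^{1/2+\delta}\to0$, closes this gap cleanly. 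This is a genuine tightening of the argument and worth keeping; the rest is the same proof in log-coordinates.
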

The proof of Lemma~\ref{lem:ode} is provided in Appendix~\ref{pf:lem:ode}. Lemma~\ref{lem:ode} essentially implies the global convergence of SGHA. Specifically, the solution of \eqref{equ:ode} is 
\begin{align*}
x_{k,j}(t)=x_{k,j}(0)\cdot\exp\left( \mu_j\mu_k \left(\beta_k-\beta_j\right)t \right)~~\forall k\neq j,
\end{align*}
where $x_{k,j}(0)$ is the initial value of $v^{(\eta)}_{k,j}$. In particular, we consider $j=1$. Then, as $t\rightarrow \infty$, the dominating component of $w$ will be $w_1$. 

The ODE approximation of the algorithm implies that after long enough time, i.e., $t$ is large enough, the solution of the algorithm can be arbitrarily close to a global optimum. Nevertheless, to obtain the asymptotic ``convergence rate'', we need to study the variance of the trajectory at time $t$. Thus, we resort to the following SDE-based approach for a more precise characterization.

\noindent{\bf SDE Characterization:} 
We notice that such a variance with order $\cO(\eta)$ vanishes as $\eta \rightarrow 0$. To characterize this variance, we rescale the updates by a factor of $\eta^{-\frac{1}{2}}$, i.e., by defining a new process as $z^{(\eta)}=\eta^{-\frac{1}{2}}w^{(\eta)}$. After rescaling, the variance of $z^{(\eta)}$ is of order $\cO(1)$.
The following lemma characterizes how the algorithm escapes from the saddle, i.e., $w^{(\eta)}(0)\approx e_i$, where $i\neq 1$, in Phase I.
\begin{lemma}\label{lem:sadd}
Suppose Assumption~\ref{assmp} holds and the initial is close to a saddle point , i.e., $z^{(\eta)}_j(0) \approx \eta^{-\frac{1}{2}}$ and $z^{(\eta)}_i(0) \approx 0$ for $i\neq j$. Then for any $C>0,$ there exist $\tau>0$ and $\eta'>0$ such that
\begin{align}\label{sde:con}
\sup_{ \eta<\eta'}\PP(\sup_t |z^{(\eta)}_i(t)|\leq C)\leq 1-\tau.
\end{align}

\end{lemma}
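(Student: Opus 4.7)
The plan is to linearize the SGHA update \eqref{equivalent} around the saddle $w\approx e_j$, pass to a diffusion limit in which the rescaled coordinate $z^{(\eta)}_1$ satisfies a linear SDE with strictly positive drift, and then conclude from the resulting Gaussian law that $z^{(\eta)}_1$ exceeds $C$ with at least constant probability on a bounded time horizon. Concretely, rewriting the increment in \eqref{equivalent} as $-\eta(\alpha^{(k)} M^{(k)} W^{(k)} - \Lambda^B \tilde{\Lambda}^{(k)} W^{(k)})$ with $\alpha^{(k)}=(W^{(k)})^\top \tilde{\Lambda}^{(k)} W^{(k)}$ and $M^{(k)} = (\Lambda^B)^{\frac{1}{2}} \hat{\Lambda}_B^{(k)} (\Lambda^B)^{-\frac{1}{2}}$, and substituting $W^{(k)}_j = 1 + O(\eta^{1/2})$ together with $W^{(k)}_i = \eta^{1/2} z^{(\eta)}_i$ for $i\neq j$, I would extract for each $i\neq j$
\begin{align*}
\EE[W^{(k+1)}_i-W^{(k)}_i \mid W^{(k)}] &= -\eta\,\mu_i(\beta_j-\beta_i)\,W^{(k)}_i + o(\eta),\\
\mathrm{Var}(W^{(k+1)}_i-W^{(k)}_i \mid W^{(k)}) &= \eta\,\sigma_i^2 + o(\eta),
\end{align*}
where $\sigma_i^2>0$ is controlled by the off-diagonal fluctuations $M^{(k)}_{ij}$ and $\tilde{\Lambda}^{(k)}_{ij}$, which dominate because they multiply $W^{(k)}_j\approx 1$.

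After the rescaling $z^{(\eta)} = \eta^{-1/2} w^{(\eta)}$ and localization via the stopping time $\tau_\eta = \inf\{t:\|w^{(\eta)}(t)-e_j\|\geq \delta_0\}$ for a small fixed $\delta_0$, I would invoke a standard diffusion approximation result (e.g.\ the weak convergence theorem in Ch.~10 of Kushner--Yin) to show that $z^{(\eta)}_i(\cdot\wedge \tau_\eta)$ converges weakly in $C([0,T],\RR)$ to the stopped solution of
\begin{align*}
dz_i(t) = \kappa_i\,z_i(t)\,dt + \sigma_i\,dB_i(t),\quad z_i(0)=0,\quad \kappa_i:=\mu_i(\beta_i-\beta_j),
\end{align*}
with tightness supplied by Assumption~\ref{assmp}(c). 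Taking $i=1$, the eigengap forces $\kappa_1=\mu_1(\beta_1-\beta_j)>0$ (because $j\neq 1$), and the explicit Gaussian law $z_1(T)\sim \mathcal{N}(0,\sigma_1^2(e^{2\kappa_1 T}-1)/(2\kappa_1))$ has variance blowing up as $T\to\infty$. Hence I can fix $T^*>0$ and $\tau_0>0$ with $\PP(|z_1(T^*)|>2C)\geq 2\tau_0$, and the Portmanteau theorem applied to the open set $\{\sup_{t\leq T^*}|z_1(t)|>C\}$ then transfers this to $\PP(\sup_{t\leq T^*}|z^{(\eta)}_1(t)|>C)\geq \tau_0$ uniformly for $\eta<\eta'$, which is exactly \eqref{sde:con} with $\tau=\tau_0$.

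The hardest step will be the \emph{localization}: the linearization above is only accurate while $w^{(\eta)}$ stays within distance $\delta_0$ of $e_j$, so I must rule out premature exit via $\tau_\eta<T^*$. Since $z^{(\eta)}=\eta^{-1/2} w^{(\eta)}$, any such exit forces $\|z^{(\eta)}\|\gtrsim \delta_0\eta^{-1/2}\gg C$ for $\eta$ small, so the escape event in $z$-coordinates strictly precedes $\tau_\eta$ and the localization is essentially harmless. A secondary subtlety is that the per-step noise in \eqref{equivalent} is not a martingale-difference increment in the usual sense, because the dual update $Y^{(k)}=(X^{(k)})^\top A^{(k)} X^{(k)}$ uses the same $A^{(k)}$ as the primal step; verifying the Lindeberg-type condition required by the diffusion approximation therefore leans on the uniform second-moment bounds in Assumption~\ref{assmp}(c), which is the main quantitative place where those moment assumptions enter.
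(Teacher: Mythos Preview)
Your approach and the paper's coincide at the level of ideas: linearize \eqref{equivalent} around the saddle $e_j$, identify the diffusion limit of the rescaled off-saddle coordinate as an unstable Ornstein--Uhlenbeck process $dz_1=\mu_1(\beta_1-\beta_j)z_1\,dt+\sigma_1\,dB$, and read off escape from the exploding Gaussian variance. The computations of the infinitesimal mean and variance you sketch are exactly the ones the paper carries out.

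Where you differ is in how the diffusion limit is justified. You localize with the stopping time $\tau_\eta=\inf\{t:\|w^{(\eta)}(t)-e_j\|\ge\delta_0\}$ and appeal to Kushner--Yin on $[0,T^*\wedge\tau_\eta]$. The paper instead argues by contradiction: assuming $\PP(\sup_t|z^{(\eta)}_i(t)|\le C)\to1$ along some $\eta_n\downarrow0$, the hypothesized uniform boundedness of $z^{(\eta_n)}_i$ is \emph{itself} what supplies tightness (via a cited compactness criterion), and then the unbounded O--U limit yields the contradiction. This packaging is slicker precisely because it turns the quantity you want to control into the localization device.

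Your localization step has a slip worth flagging. You write that premature exit $\tau_\eta<T^*$ forces $\|z^{(\eta)}\|\gtrsim\delta_0\eta^{-1/2}$, but $\|z^{(\eta)}\|\ge|z^{(\eta)}_j|\approx\eta^{-1/2}$ holds from time zero, so that inequality is vacuous. What exit actually gives is $\|z^{(\eta)}(\tau_\eta)-\eta^{-1/2}e_j\|=\delta_0\eta^{-1/2}$, and this could in principle be carried entirely by the $j$-th coordinate (i.e.\ $w^{(\eta)}_j$ drifting away from $1$) or by some coordinate $i\notin\{1,j\}$, neither of which forces $|z^{(\eta)}_1|>C$. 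To close this you would need a separate estimate that $w^{(\eta)}_j$ stays near $1$ on $[0,T^*]$ and that the escape direction can be taken to be $i=1$; both are doable, but the paper's contradiction argument sidesteps this bookkeeping entirely.
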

Here we provide the proof sketch and leave the whole proof of Lemma~\ref{lem:sadd} in Appendix~\ref{pf:lem:sadd}.
\begin{proof}[Proof Sketch]
 We prove this argument by contradiction. Assume the conclusion does not hold, that is there exists a constant $C>0,$ such that for any  $\eta'>0$ we have $$\sup_{\eta\leq \eta'}\PP(\sup_t |z^{(\eta)}_i(t)|\leq C)=1.$$ That implies there exists a sequence $\{\eta_n\}_{n=1}^\infty$ converging to $0$ such that 
\begin{align}\label{eq_contra}
\lim_{n\rightarrow\infty}\PP(\sup_t |z^{(\eta_n)}_i(t)|\leq C)= 1.
\end{align}
Then we show  $\{z^{(\eta_n)}_i(\cdot)\}_n$ is tight and thus converges weakly. Furthermore, $\{z^{(\eta_n)}_i(\cdot)\}_n$ weakly converges to a stochastic differential equation,
	\begin{align}\label{equ:sadd}
	d z_j(t) =\left( -\beta_j\mu_i \cdot z_i +\lambda_i z_i \right)dt+\sqrt{G_{j,i}}d B(t)~~\textrm{for $j\in[d]\backslash\{i\}$,}
	\end{align}
	where ${G_{j,i}}=\EE\Big(\big(\hat{\Lambda}_B^{(k)}\big)_{j,i}\cdot \sqrt{\mu_j/ \mu_i}\cdot \tilde{\Lambda}_{i,i}-\mu_j\tilde{\Lambda}_{j,i}\Big)^2$ and $B(t)$ is a standard Brownian motion.
We compute the solution of this stochastic differential equation and then show \eqref{sde:con} holds. 
\end{proof}

 Note that \eqref{equ:sadd} is a Fokker-Plank equation, whose solution is an Ornstein-Uhlenbeck (O-U) process \citep{doob1942brownian} as follows:
\begin{align}\label{O-U process-1}
&z_j(t)=\Big[\underbrace{z_j(0)+\sqrt{G_{j,i}}\int_0^t \exp\left[\mu_j \left(\beta_i -\beta_j \right)s\right] dB(s)}_{Q_1}\Big]\cdot \exp\left[ -\mu_j \left(\beta_i- \beta_j \right)t \right].
\end{align}
We consider $j=1$. Note that $Q_1$  is essentially a random variable with mean $z_{j}(0)$ and variance smaller than $\frac{G_{1,i}\mu_1}{2(\beta_1-\beta_i)}$. However, the larger $t$ is, the closer its variance gets to this upper bound. Moreover, the term $\exp\big[\mu_1(\beta_1-\beta_i)t\big]$ essentially amplifies $Q_1$ by a factor exponentially increasing in $t$. This tremendous amplification forces $z_{1}(t)$ to quickly get away from $0$, as $t$ increases, which indicates that the algorithm will escape from the saddle. Further, the following lemma characterizes the local behavior of the algorithm near the optimal.

\begin{lemma}\label{lem:sde}
	Suppose that Assumption~\ref{assmp} holds and the initial solution is close to an optimal solution, that is, given pre-specified constants $\kappa$ and $\delta<\frac{1}{2}$, we have $\frac{|w^{(\eta)}_1|^2}{\norm{w^{(\eta)}}_2^2}>1-\kappa \eta^{1+2\delta}$. As  $\eta \rightarrow 0$, then we have $\norm{w^{(\eta)}(t)}_2\xrightarrow{t\rightarrow \infty} 1$ and $z^{(\eta)}_i$ weakly converges to the solution of the following SDE:
	\begin{align}\label{equ:sde}
	d z_i(t) =\left( -\beta_1\cdot \mu_i z_i +\lambda_i z_i \right)dt+\sqrt{G_{i,1}}d B(t)~~\textrm{for $i\neq 1$,}
	\end{align}
	where $G_{i,1}=\EE\big((\hat{\Lambda}_B )_{i,1}\cdot \sqrt{\mu_i/\mu_1}\cdot \tilde{\Lambda}_{1,1}-\mu_i\Lambda_{i,1}\big)^2$, and $B(t)$ is a standard Brownian motion.
\end{lemma}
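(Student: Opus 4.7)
The plan is to combine two pieces: (i) show the constraint-like quantity $\|w^{(\eta)}(t)\|_2 \to 1$ as $t \to \infty$ along trajectories starting near the global optimum $W^* = e_1$, and (ii) identify the weak limit of the rescaled off-axis fluctuations $z^{(\eta)}_i = \eta^{-1/2} w^{(\eta)}_i$ ($i \neq 1$) via the standard diffusion-approximation machinery. For (i), I would take the inner product of the update \eqref{equivalent} with $W^{(k)}$ and condition on the past; using $\EE \hat{\Lambda}_B^{(k)} = \Lambda^B$ and $\EE \tilde{\Lambda}^{(k)} = \Lambda$ (since $A^{(k)}$ and $B^{(k)}$ are independent), the conditional expected one-step change of $\|W\|_2^2 - 1$ is proportional to $-2\eta\, W^{(k)\top}\Lambda^B(W^{(k)}W^{(k)\top} - I)\Lambda W^{(k)}$. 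Under the near-optimum hypothesis $|w_1|^2/\|w\|_2^2 > 1 - \kappa\eta^{1+2\delta}$, this drift is mean-reverting toward $\|W\|_2 = 1$, and combined with Lemma~\ref{lem:ode} (which sends the off-axis components $w_j$, $j\neq 1$, to $0$), this yields the desired norm convergence.

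For (ii), I would linearize the drift around $W^* = e_1$. Writing $W = e_1 + v$ with $\|v\|$ small, a direct Taylor expansion of the deterministic drift $\Lambda^B(WW^\top - I)\Lambda W$ yields, in the $i$-th coordinate for $i\neq 1$, the leading term $\mu_i(\beta_1 - \beta_i)v_i + O(\|v\|^2)$. After the rescaling $z_i = \eta^{-1/2} W_i$, the drift per unit time becomes $-\mu_i(\beta_1 - \beta_i)z_i = (\lambda_i - \beta_1\mu_i)z_i$, which matches \eqref{equ:sde}. The martingale fluctuation, obtained by subtracting the conditional expectation from the update and rescaling by $\eta^{-1/2}$, has conditional variance per unit time equal to the variance of $F_i(e_1, A^{(k)}, B^{(k)}) := \sqrt{\mu_i/\mu_1}\,(\hat{\Lambda}_B^{(k)})_{i,1}\,\tilde{\Lambda}^{(k)}_{1,1} - \mu_i\tilde{\Lambda}^{(k)}_{i,1}$, which is precisely $G_{i,1}$.

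With the drift and diffusion coefficients identified, I would invoke the standard Kushner--Yin / Ethier--Kurtz martingale-problem framework: (a) Assumption~\ref{assmp}(c) supplies uniform second-moment bounds on the update increments, giving tightness of $\{z^{(\eta)}_i\}$ in the Skorokhod space $D([0,T],\RR)$; (b) the infinitesimal generator of any subsequential weak limit, read off from $\eta^{-1}\EE[f(z^{(\eta)}(t+\eta)) - f(z^{(\eta)}(t)) \mid \mathcal{F}_t]$ for $f \in C^2_c$, coincides with the Ornstein--Uhlenbeck generator associated with \eqref{equ:sde}; (c) uniqueness of the linear martingale problem then pins down the weak limit. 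The main obstacle will be the interplay between the macroscopic and microscopic scales: the quadratic remainder $O(\|v\|^2)$ contributes $O(\eta^{-1/2}\|v\|^2)$ to the rescaled drift, so for this remainder to be negligible we need the norm convergence in step (i) to be quantitative, namely $\|v\|^2 = o(\eta^{1/2})$. I would handle this by localizing: introduce a stopping time $\tau_\eta$ at which $\|z^{(\eta)}\|_2$ first exits a fixed compact set $K$, show $\PP(\tau_\eta \leq T) \to 0$ using the strongly mean-reverting O-U behaviour guaranteed by the eigengap $\beta_1 > \beta_i$, and establish the weak convergence for the stopped process before letting $K \to \infty$.
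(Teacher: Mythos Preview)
Your proposal is correct and follows essentially the same route as the paper: derive the mean-reverting ODE for $\|w^{(\eta)}\|_2^2$ to establish norm convergence, then compute the infinitesimal conditional mean and variance of $z_i^{(\eta)}$ at $W^*=e_1$ (yielding the drift $(\lambda_i-\beta_1\mu_i)z_i$ and diffusion coefficient $G_{i,1}$) and invoke the Ethier--Kurtz weak-convergence criterion. The paper's own argument is in fact terser than yours---it omits the explicit tightness check and the stopping-time localization controlling the $O(\eta^{-1/2}\|v\|^2)$ remainder---so the extra care you outline is an improvement rather than a deviation.
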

The proof of Lemma~\ref{lem:sde} is provided in Appendix~\ref{pf:lem:sde}. The solution of \eqref{equ:sde} is as follows:
\begin{align}\label{O-U process-2}
&z_i(t)=\sqrt{G_{i,1}}\int_0^t \exp\left[\mu_i\left(\beta_1  -\beta_i \right)\left(s-t\right)\right] dB(s)+z_i(0) \cdot \exp\left[  -\mu_i\left(\beta_1  -\beta_i \right)t \right].
\end{align}
Note the second term of the right hand side in \eqref{O-U process-2} decays to 0, as time $t\rightarrow \infty$. The rest is a pure random walk. Thus, the fluctuation of $z_i(t)$ is essentially the error fluctuation of the algorithm after sufficiently long time.

Combining Lemma~\ref{lem:ode},~\ref{lem:sadd}, and~\ref{lem:sde}, 
we obtain the following theorem.

\begin{theorem}\label{thm:num-iter-step-size}
Suppose Assumption~\ref{assmp} holds. Given a sufficiently small error $\epsilon>0$,~$\phi=\sum_{i=1}^dG_{i,1}$, and
\begin{align*}
\eta \asymp \frac{\epsilon \cdot \mu_{\min}\cdot{\sf gap}}{\phi},
\end{align*}
we need
\begin{align}\label{num}
T\asymp\frac{\mu_{\max}/\mu_{\min}}{\mu_1\cdot\sf{gap}}\log\left(\eta^{-1}\right)
\end{align}
such that with probability at least $\frac{5}{8}$, $\norm{w(T)-W^*}_2^2\leq \epsilon$, where $W^*$ is the optima of \eqref{eqn:original}.
\end{theorem}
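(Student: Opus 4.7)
The plan is to combine the three phase-specific results (Lemmas~\ref{lem:sadd},~\ref{lem:ode}, and~\ref{lem:sde}) into a single end-to-end sample complexity bound. The trajectory of $w^{(\eta)}$ is decomposed into a saddle-escape phase, an ODE-driven traversal phase, and a noise-dominated local phase. The step size $\eta$ is pinned down by the local phase (to control the asymptotic fluctuation around $W^*$), and the time horizon $T$ by the first two phases; a union bound over three constant-failure events then delivers the stated $\tfrac{5}{8}$ probability.

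First I would calibrate $\eta$ via Phase~III. From \eqref{O-U process-2}, for each $i\neq 1$ the rescaled coordinate $z_i$ is asymptotically Gaussian with stationary variance $\frac{G_{i,1}}{2\mu_i(\beta_1-\beta_i)}$. Summing over $i$ and unscaling through $w^{(\eta)} = \eta^{1/2} z^{(\eta)}$,
\begin{align*}
\EE\norm{w(T)-W^*}_2^2 \;\lesssim\; \eta\sum_{i\neq 1}\frac{G_{i,1}}{2\mu_i(\beta_1-\beta_i)} \;\leq\; \frac{\eta\cdot\phi}{2\mu_{\min}\cdot\mathsf{gap}},
\end{align*}
where I used $\mu_i(\beta_1-\beta_i)\geq \mu_{\min}\cdot\mathsf{gap}$. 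Demanding this be $\lesssim \epsilon$, together with a Markov-style failure budget of $\tfrac{1}{8}$, forces $\eta\asymp\frac{\epsilon\cdot\mu_{\min}\cdot\mathsf{gap}}{\phi}$, exactly matching the theorem.

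Next I would allocate time to Phases~I and~II. Phase~I (Lemma~\ref{lem:sadd}) uses the exponential amplification factor $\exp[\mu_1(\beta_1-\beta_j)t]$ in \eqref{O-U process-1} to lift $z_1$ from $\approx 0$ to order $1$ with constant probability, which costs time $O\!\left(\tfrac{1}{\mu_1\cdot\mathsf{gap}}\log\eta^{-1}\right)$. Phase~II (Lemma~\ref{lem:ode}) then drives $v_{k,1}$ to $0$ at rate $\mu_1\mu_k(\beta_1-\beta_k)$; using the exponent relation $w_k\sim v_{k,1}^{1/\mu_1}$ and requiring each $w_k$ to shrink to the scale $O(\eta^{1/2})$ demanded by the initial-condition hypothesis of Lemma~\ref{lem:sde}, I would obtain a Phase~II budget of $O\!\left(\tfrac{\mu_{\max}/\mu_{\min}}{\mu_1\cdot\mathsf{gap}}\log\eta^{-1}\right)$. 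Since Phase~II dominates Phase~I, the total time matches \eqref{num}.

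The main obstacle is the probabilistic handoff between the three phases. Each lemma provides only weak convergence (as $\eta\to 0$) of $w^{(\eta)}$ or $z^{(\eta)}$ to an SDE/ODE on its own regime, so one must show that with probability $\geq 1-\tfrac{1}{8}$ the output of one phase satisfies the initial-condition hypotheses of the next: Phase~I must exit with $|w_1|\gtrsim\tau$ so that Lemma~\ref{lem:ode} applies, and Phase~II must enter Phase~III with $\frac{|w_1|^2}{\norm{w}_2^2}\geq 1-\kappa\eta^{1+2\delta}$ so that Lemma~\ref{lem:sde} applies. A secondary subtlety is that Lemma~\ref{lem:sde} gives the stationary variance only asymptotically as $t\to\infty$, so I would verify that the mixing time inside Phase~III is absorbed by the $\log\eta^{-1}$ factor in $T$. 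A triple union bound over the three constant failure budgets then yields the overall $\tfrac{5}{8}$ success probability.
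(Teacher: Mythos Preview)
Your proposal is correct and follows essentially the same three-phase decomposition as the paper: calibrate $\eta$ via the stationary variance of the Phase~III O--U process and a Markov bound, then add the Phase~I and Phase~II time budgets (with Phase~II dominating), and combine constant failure probabilities to reach $\tfrac{5}{8}$. The only cosmetic differences are that the paper splits Phase~II into two sub-steps ($T_2$ and $T_2'$) and obtains the $\tfrac{5}{8}$ by multiplying four specific success probabilities ($\tfrac{9}{10}\cdot\tfrac{5}{6}\cdot\tfrac{8}{9}\cdot\tfrac{15}{16}$) rather than a union bound, and that Phase~I in the paper actually costs only $O\!\big(\tfrac{1}{\mu_1\cdot\mathsf{gap}}\big)$ time (no $\log\eta^{-1}$), which your overestimate harmlessly absorbs into the dominant Phase~II term.
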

The proof of Theorem~\ref{thm:num-iter-step-size} is provided in Appendix~\ref{pf:thm:num-iter-step-size}. Theorem~\ref{thm:num-iter-step-size} implies that asymptotically, our algorithm yields an iterations of complexity:
\begin{align*}
N\asymp \frac{T}{\eta}\asymp\frac{\phi\cdot \mu_{\max}/\mu_{\min}}{\epsilon\cdot\mu_1\cdot\mu_{\min}\cdot\sf{gap}^2}\log\left(\frac{\phi}{\epsilon\cdot\mu_{\min}\cdot \sf{gap}}\right),
\end{align*}
which not only depends on the gap, i.e., $\beta_1-\beta_2$, but also depends on $\frac{\mu_{\max}}{\mu_{\min}}$, which is the condition number of $B$ in the worst case. As can be seen, for an ill-conditioned $B$, the problem~\eqref{eqn:original} is more difficult to solve.

\subsection{When $A$ and $B$ are Noncommutative?}

Unfortunately, when $A$ and $B$ are noncommutative, the analysis is more difficult, even for $r=1$. Recall that the optimization landscape of the Lagrangian function in \eqref{eqn:minimax} enjoys a nice geometric property: At an unstable equilibrium, the negative curvature with respect to the primal variable encourages the algorithm to escape. Specifically, suppose the algorithm is initialized at an unstable equilibrium $(X^{(0)},Y^{(0)})$, the descent direction for $X^{(0)}$ is determined by the eigenvectors of 
\begin{align*}
H_{X^{(0)}} = A + Y^{(0)}B
\end{align*} associated with the negative eigenvalues. After one iteration, we obtain $(X^{(1)},Y^{(1)})$. The Hessian matrix becomes 
\begin{align*}
H_{X^{(1)}} = A + Y^{(1)}B.
\end{align*}
Since $Y^{(1)}=X^{(0)\top} A^{(0)} X^{(0)}$ is a stochastic approximation, the random noise can make $Y^{(1)}$ significantly different from $Y^{(0)}$. Thus, the eigenvectors of  $H_{X^{(1)}}$ associated with the negative eigenvalues can be also very different from those of $H_{X^{(0)}}$. This phenomenon can seriously confuse the algorithm about the descent direction of the primal variable. We remark that such an issue does not appear if we assume $A$ and $B$ are commutative. We suspect that this is very likely an artifact of our proof technique, since our numerical experiments have provided some empirical evidences of the convergence of SGHA.

\section{Discussion}

Here we briefly discuss a few related works:
\begin{itemize}[leftmargin=*]
\item {\cite{li2016symmetry}} propose a framework for characterizing the stationary points in the unconstrained nonconvex matrix factorization problem, while our studied generalized eigenvalue problem is constrained. Different from their analysis, we analyze the optimization landscape of the corresponding Lagrangian function. When characterize the stationary points, we need to take both primal and dual variables into consideration, which is technically more challenging. 

\item {\cite{ge2016efficient}} also consider the (off-line) generalized eigenvalue problem but in a finite sum form. Unlike our studied online setting, they access exact $A$ and $B$ in each iteration. Specifically, they need to access exact $A$ and $B$ to compute an approximate inverse of $B$ to find the descent direction. Meanwhile, they also need a modified Gram Schmidt process, which also requires accessing exact $B$, to maintain the solution on the generalized Stiefel manifold (defined by $X^\top B X=I_r$ via exact $B$, \cite{mishra2016riemannian}). Our proposed stochastic search, however, is a full stochastic primal-dual algorithm, which neither require accessing exact $A$ and $B$, nor enforcing the the primal variables to stay on the manifold.
\end{itemize}

\bibliographystyle{ims}
\bibliography{ref.bib}

\appendix

\newpage
\onecolumn
\appendix
\section{Proofs for Determining Stationary Points}

%
%
%

\subsection{Proof of Theorem~\ref{thm:stationary}}\label{pf:thm:stationary}
\begin{proof}
	 Remind that the eigendecomposition of $\tilde{A}$ is $(\Lambda^B)^{-\frac{1}{2}} O^{B\top}A O^B (\Lambda^B)^{-\frac{1}{2}}= O^{\tilde{A}} \Lambda^{\tilde{A}} (O^{\tilde{A}})^{\top}$. Given the eigendecomposition of $B$ is $B = O^{B} \Lambda^{B} (O^{B})^{\top}$, we can write $B^{-1}$ as
	\begin{align*}
		B^{-1} = O^B (\Lambda^B)^{-1} (O^B)^{\top}.
	\end{align*}
	

We denote $\tilde{X}$ as $\tilde{X} = O^{\tilde{A}}_{:,\cI}$ for some $\cI \subseteq [d]$ with $|\cI| = r$. For $X = (B^{-1/2} O^{\tilde{A}}_{:,\cI})\cdot \Psi ,$ where $\Psi\in \cG$. It is easy to see that $\nabla_Y \cL(X,Y)=0$. Ignore the constant 2 in the gradient $\nabla_X \cL(X,Y)$ for convenience, we have,
	\begin{align*}
		\nabla_X \cL(X,Y) &= -(I_d - BXX^\top) A X = -(I_d - B B^{-1/2} O^{\tilde{A}}_{:,\cI} (O^{\tilde{A}}_{:,\cI})^{\top} B^{-1/2} ) A B^{-1/2} O^{\tilde{A}}_{:,\cI} \\
		& = -A B^{-1/2} O^{\tilde{A}}_{:,\cI} + B^{1/2} O^{\tilde{A}}_{:,\cI} (O^{\tilde{A}}_{:,\cI})^{\top} O^{\tilde{A}} \Lambda^{\tilde{A}} (O^{\tilde{A}})^{\top} O^{\tilde{A}}_{:,\cI} \\
		& = -B^{1/2} O^{\tilde{A}} \Lambda^{\tilde{A}} (O^{\tilde{A}})^{\top} O^{\tilde{A}}_{:,\cI} + B^{1/2} O^{\tilde{A}}_{:,\cI} \Lambda^{\tilde{A}}_{\cI,\cI} \\
		&= -B^{1/2} O^{\tilde{A}} \Lambda^{\tilde{A}}_{:\cI} + B^{1/2}O^{\tilde{A}}_{:,\cI} \Lambda^{\tilde{A}}_{\cI,\cI} = 0.
	\end{align*}
	
	Next we show that if $X$ is not as specified, then $\nabla_X \cL(X,Y) \neq 0$. We only need to show that if $\tilde{X} = [O^{\tilde{A}}_{:,\cS}, ~\phi] \Psi$, where $\cS \subseteq [d]$ with $|\cS| = r-1$ and $\phi = c_1 O^{\tilde{A}}_{:,i} + c_2 O^{\tilde{A}}_{:,j}$ with $i,j \not\in \cS$, $i \neq j$, $c_1^2 + c_2^2 = 1$, and $c_1,c_2 \neq 0$, then we have $\nabla_X \cL(X,Y) \neq 0$. The general scenario can be induced from this basic setting. It is easy to see that such an $X = B^{-1/2} \tilde{X}$ satisfies the constraint,
	\begin{align*}
		X^{\top} B X = \Psi^{\top} [O^{\tilde{A}}_{:,\cS}, ~\phi]^{\top} B^{-1/2} B B^{-1/2}[O^{\tilde{A}}_{:,\cS}, ~\phi] \Psi = \Psi^{\top} \left[ \begin{array}{cc}
			I_{r-1} & 0_{(r-1) \times 1} \\
			0_{1 \times (r-1)} & \phi^{\top} \phi
		\end{array} \right] \Psi = I_r,
	\end{align*}
	where the last equality follow from $\phi^{\top} \phi = c_1^2 + c_2^2 = 1$. 
	
	Plugging such an $X$ into the gradient, we have
	\begin{align*}
		\nabla_X \cL(X,Y) &= -(I_d - B X X^\top) AX = -(I_d - B B^{-1/2} [O^{\tilde{A}}_{:,\cS}, ~\phi] [O^{\tilde{A}}_{:,\cS}, ~\phi]^{\top} B^{-1/2}) A B^{-1/2} [O^{\tilde{A}}_{:,\cS}, ~\phi] \Psi \\
		&= -B^{1/2} (O^{\tilde{A}}_{:,\cS^{\perp}} (O^{\tilde{A}}_{:,\cS^{\perp}})^{\top} - \phi \phi^{\top}) O^{\tilde{A}} \Lambda^{\tilde{A}} [(I_d)_{\cS}, ~c_1 e_i + c_2 e_j] \Psi \\
		&= -B^{1/2} [0_{d \times (r-1)}, ~O^{\tilde{A}}_{:,\cS^{\perp}} \Lambda^{\tilde{A}}_{\cS^{\perp},:}(c_1 e_i + c_2 e_j)]\Psi + [0_{d \times (r-1)},~\phi (c_1^2 \lambda^{\tilde{A}}_{i}+c_2^2 \lambda^{\tilde{A}}_{j})]\Psi \\
		&= -B^{1/2} [0_{d \times (r-1)}, ~c_1 c_2^2(\lambda^{\tilde{A}}_{i} + \lambda^{\tilde{A}}_{j}) O^{\tilde{A}}_{:,i} + c_2 c_1^2(\lambda^{\tilde{A}}_{j} - \lambda^{\tilde{A}}_{i}) O^{\tilde{A}}_{j,j}] \Psi \neq 0,
	\end{align*}
	where the last $\neq$ is from $c_1,c_2 \neq 0$, $c_1^2+ c_2^2 = 1$, $\lambda^{\tilde{A}}_{j} \neq \lambda^{\tilde{A}}_{j}$ for $i \neq j$.
\end{proof}

\subsection{Proof of Theorem~\ref{lem:invertB_saddle}} \label{pf:lem:invertB_saddle}

\begin{proof}
	We have the Hessian of $\cL(X,Y)$ on $X$ with $Y=\cD(X)$ as
	\begin{align}
		H_X &= 2\sym\big( I_r \otimes (( BXX^\top - I_d ) A) + (X^\top A X) \otimes B + (AX) \boxtimes (BX) \big) \label{eqn:hessian}
	\end{align}
	where $\sym(M) = M + M^{\top}$, $\otimes$ is the Kronecker product, and for $U \in \RR^{d \times r}$ and $V \in \RR^{m \times k}$, $U \boxtimes V \in \RR^{dk \times mr}$ is defined as
	\begin{align*}
		U \boxtimes V = \left[ \begin{array}{cccc}
			U_{:,1} V_{:,1}^\top & U_{:,2} V_{:,1}^\top & \cdots & U_{:,r} V_{:,1}^\top \\
			U_{:,1} V_{:,2}^\top & U_{:,2} V_{:,2}^\top & \cdots & U_{:,r} V_{:,2}^\top \\
			\vdots & \vdots & \ddots & \vdots \\
			U_{:,1} V_{:,k}^\top & U_{:,2} V_{:,k}^\top & \cdots & U_{:,r} V_{:,k}^\top
		\end{array} \right].
	\end{align*}
	
	To determine whether a stationary point is an unstable stationary or a minimax global optimum, we consider its Hessian. We start with checking that $\cS = [r]$ corresponds to the global optimum, $X = B^{-1/2} O^{\tilde{A}}_{:,[r]} \Psi$. Without loss of generality, we set $\Psi = I_r$. We only need to check that for any vector $v = [v_1^{\top},\ldots,v_r^{\top}]^{\top} \in \RR^{nr}$ with $v_i \in \RR^n$ denoting the $i$-th block of $v$, which satisfies
	\begin{align*}
		v_i = c_{j_i} B^{-1/2} O^{\tilde{A}}_{:,{j_i}} \text{ for any $j_i \in [d]$ and a real constant $c_j$}
	\end{align*}
	such that $\norm{v}_2 = 1$, then we have $v^\top H_X v \geq 0$. The general case is only a linear combination of such $v$'s. Specifically, for $X = O^{\tilde{A}}_{:,[r]}$, we have
	\begin{align*}
		v^\top H_X v &= -v^\top \sym\left( I_r \otimes ((I_d - B XX^\top) A) - (X^\top A X) \otimes B - (AX) \boxtimes (BX) \right) v \\
		&= -v^\top \sym\Big( I_r \otimes ((I_d - B^{1/2} O^{\tilde{A}}_{:,[r]} O^{\tilde{A} \top}_{:,[r]} B^{-1/2}) A) - (O^{\tilde{A} \top}_{:,[r]} B^{-1/2} A B^{-1/2} O^{\tilde{A}}_{:,[r]} ) \otimes B\\
		&\hspace{2.0in} - (A B^{-1/2} O^{\tilde{A}}_{:,[r]}) \boxtimes (B^{1/2} O^{\tilde{A}}_{:,[r]}) \Big) v \\
		&= -v^\top \sym\Big( I_r \otimes ( B^{1/2} O^{\tilde{A}}_{:,[d]\backslash [r]} O^{\tilde{A} \top}_{:,[d]\backslash [r]} B^{-1/2} A ) - \Lambda^{\tilde{A}}_{:, [r]} \otimes B  - (B^{1/2} O^{\tilde{A}} \Lambda^{\tilde{A}}_{:, [r]}) \boxtimes (B^{1/2} O^{\tilde{A}}_{:,[r]}) \Big) v \\
		&= -2 \sum_{i=1}^r c_{j_i}^2 O^{\tilde{A} \top}_{:,j_i} O^{\tilde{A}}_{:,[d]\backslash [r]} \Lambda^{\tilde{A}}_{[d]\backslash [r],:} O^{\tilde{A} \top} O^{\tilde{A}}_{:,j_i} + 2 \sum_{i=1}^r c_{j_i}^2 \lambda^{\tilde{A}}_{i} + 2 \sum_{i=1}^{r} \sum_{k=1}^{r} c_{j_i} c_{j_k} e_{j_i}^{\top} \Lambda^{\tilde{A}}_{:, k} O^{\tilde{A} \top}_{:,i} O^{\tilde{A}}_{j_k} \\
		&\geq 0 + 2 \sum_{i=1}^r c_{j_i}^2 \lambda^{\tilde{A}}_{i} + 2 \sum_{i=1}^{r} \sum_{k=1}^{r} c_{j_i} c_{j_k} \lambda^{\tilde{A}}_{j_i} = 0,
	\end{align*}
	where the last inequality is obtained by taking $j_k \in [r]$, $i=j_k$, and $k=j_i$ in the last term, and the last equality is obtained by setting $c_{j_k} = -c_{j_i}$ when $j_i=k$, which implies that the restricted strongly convex property at $X$ holds.

	For any other $\cI \neq [r]$, we only need to show that the largest eigenvalue of $\nabla^2 \cL$ is positive and the smallest eigenvalue of $\nabla^2 \cL$ is negative, which implies that such a stationary point is unstable. Using the same construction as above, we have
	\begin{align*}
		&\lambda_{\min} (H_X) \leq -v^\top \sym \left( I_r \otimes ((I_d - B XX^\top) A) - (X^\top A X) \otimes B - (AX) \boxtimes (BX) \right) v\\
		&= -v^\top \sym \Big( I_r \otimes ( B^{1/2} O^{\tilde{A}}_{:,\cI^{\perp}} O^{\tilde{A} \top}_{:,\cI^{\perp}} B^{-1/2} A ) - \Lambda^{\tilde{A}}_{:, \cI} \otimes B - (B^{1/2} O^{\tilde{A}} \Lambda^{\tilde{A}}_{:, \cI}) \boxtimes (B^{1/2} O^{\tilde{A}}_{:,\cI}) \Big) v \\
		&= -2 \sum_{i \in \cI} c_{j_i}^2 O^{\tilde{A} \top}_{:,j_i} O^{\tilde{A}}_{:,\cI^{\perp}} \Lambda^{\tilde{A}}_{\cI^{\perp},:} O^{\tilde{A} \top} O^{\tilde{A}}_{:,j_i} + 2 \sum_{i \in \cI} c_{j_i}^2 \lambda^{\tilde{A}}_{i} + 2 \sum_{i \in \cI} \sum_{k \in \cI} c_{j_i} c_{j_k} e_{j_i}^{\top} \Lambda^{\tilde{A}}_{:, k} O^{\tilde{A} \top}_{:,i} O^{\tilde{A}}_{j_k} \\
		&\overset{(i)}{=}  2 c_{j_r}^2 ( \lambda^{\tilde{A}}_{\max \cI} - \lambda^{\tilde{A}}_{\min \cI^{\perp}}),
	\end{align*}
	where $(i)$ is from setting $c_{j_i}=0$ for all $j_i \in \cI^{\perp}$ except $j_r$, and $c_{j_r}= 1/\| B^{-1/2} O^{\tilde{A}}_{:,\min \cI^{\perp}}\|_2$.
	
	On the other hand, we have
	\begin{align*}
		&\lambda_{\max} (H_X) \geq v^\top H_X v \\
		&= -2 \sum_{i \in \cI} c_{j_i}^2 O^{\tilde{A} \top}_{:,j_i} O^{\tilde{A}}_{:,\cI^{\perp}} \Lambda^{\tilde{A}}_{\cI^{\perp},:} O^{\tilde{A} \top} O^{\tilde{A}}_{:,j_i} + 2 \sum_{i \in \cI} c_{j_i}^2 \lambda^{\tilde{A}}_{i} + 2 \sum_{i \in \cI} \sum_{k \in \cI} c_{j_i} c_{j_k} e_{j_i}^{\top} \Lambda^{\tilde{A}}_{:, k} O^{\tilde{A} \top}_{:,i} O^{\tilde{A}}_{j_k} \\
		&\overset{(i)}{=} 2 c_{j_1}^2\lambda^{\tilde{A}}_{\min \cI} + c_{j_1}^2\lambda^{\tilde{A}}_{\min \cI} = 4 c_{j_1}^2 \lambda^{\tilde{A}}_{\min \cI},
	\end{align*}
	where $(i)$ is from setting $c_{j_i}=0$ for all $j_i \in \cI$ except $j_1$, and $c_{j_1} = 1/\| B^{-1/2} O^{\tilde{A}}_{:,\min \cI}\|_2$. 
	
\end{proof}

\section{Singular case for $B$}\label{sec:singular}
When \textbf{$B$ is Singular}, we assume $\rk(B) = m <d$ and $\rk(A)=d$. 
Note that we require $m \geq r$; Otherwise, the feasible region of \eqref{eqn:original} becomes $\cT_{B} = \emptyset$. 

Before we proceed with our analysis, we first exclude an ill-defined case, where the objective function of \eqref{eqn:original} is unbounded from above. The following proposition shows the sufficient and necessary condition of the existence of the global optima of \eqref{eqn:original}.
\begin{proposition}\label{nooptimal}
Given a full rank symmetric matrix $A\in\RR^{d\times d}$ and a positive semidefinite matrix $B\in\RR^{d\times d}$, the optimal solution of \eqref{eqn:original} exists if and only if for all $v\in \textrm{Null}(B)$, one of the following two condition holds: (1) $v^\top A v<0$; (2)  $v^\top A v=0$ and $u^\top A v=0$, $\forall u\in \textrm{Col}(B)$.
\end{proposition}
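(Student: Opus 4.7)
The plan is to reduce everything to a decomposition of $X$ along the complementary subspaces $\textrm{Col}(B)$ and $\textrm{Null}(B)$, exploiting the key structural fact that the feasible set $\cT_B$ is invariant under perturbations of any column of $X$ by a vector in $\textrm{Null}(B)$: since $Bv = 0$ for such $v$, the quantity $X^\top B X$ is unchanged by the substitution $X \mapsto X + tv\,e_i^\top$. The main obstacle is the degenerate case (2): the destabilizing perturbation then has only a linear-in-$t$ signature rather than a quadratic one, so one has to construct feasible $X$'s carefully on the necessity side, and on the sufficiency side one must upgrade the a priori non-strict negative semidefiniteness of $A$ on $\textrm{Null}(B)$ to strict negative definiteness using the full-rank assumption on $A$.

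For necessity (by contrapositive), suppose some $v \in \textrm{Null}(B)$ satisfies neither (1) nor (2). Pick any feasible $X$ (which exists because $m \geq r$ permits forming a $B$-orthonormal $r$-frame inside $\textrm{Col}(B)$) and consider $X_t = X + tv\,e_1^\top$. Feasibility is preserved by the observation above, and
\begin{align*}
\tr(X_t^\top A X_t) = \tr(X^\top A X) + 2t\, X_{:,1}^\top A v + t^2\, v^\top A v.
\end{align*}
If $v^\top A v > 0$, the quadratic term drives the objective $-\tr(X_t^\top A X_t)$ to $-\infty$ as $|t| \to \infty$. If instead $v^\top A v = 0$ but some $u \in \textrm{Col}(B)$ has $u^\top A v \neq 0$, I first choose a feasible $X$ whose first column is $x_1 = u / \sqrt{u^\top B u}$ (well-defined because $u \in \textrm{Col}(B)\setminus\{0\}$ and $B$ is positive definite on $\textrm{Col}(B)$) and extend it to a $B$-orthonormal $r$-frame inside $\textrm{Col}(B)$; then $X_{:,1}^\top A v = u^\top A v / \sqrt{u^\top B u} \neq 0$, and the surviving linear term in $t$ again sends the objective to $-\infty$. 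In either case no optimum can exist.

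For sufficiency, write $X = X_C + X_N$ with columns of $X_C$ in $\textrm{Col}(B)$ and columns of $X_N$ in $\textrm{Null}(B)$. Since $B X_N = 0$, the constraint collapses to $X_C^\top B X_C = I_r$, so admissible $X_C$'s form a generalized Stiefel manifold inside the $m$-dimensional space $\textrm{Col}(B)$ (equivalent to the standard Stiefel manifold in $\RR^m$ via the change of variables $Y = B^{1/2}|_{\textrm{Col}(B)}\, X_C$), hence compact. The objective splits as
\begin{align*}
\tr(X^\top A X) = \tr(X_C^\top A X_C) + 2\sum_i X_{C,:,i}^\top A X_{N,:,i} + \sum_i X_{N,:,i}^\top A X_{N,:,i}.
\end{align*}
Next I would upgrade the hypothesis to $A|_{\textrm{Null}(B)} \prec 0$: either (1) or (2) gives $v^\top A v \leq 0$ for every $v \in \textrm{Null}(B)$; if some nonzero $v$ achieved equality, negative semidefiniteness forces $A v \perp \textrm{Null}(B)$, i.e., $A v \in \textrm{Col}(B)$, while (2) forces $A v \perp \textrm{Col}(B)$, yielding $A v = 0$ and contradicting $\rk(A) = d$. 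With $A$ strictly negative definite on $\textrm{Null}(B)$, for each fixed $X_C$ the objective is a strictly concave quadratic in $X_N$ whose unique maximizer is a continuous function of $X_C$. Substituting this maximizer reduces the outer problem to maximizing a continuous function on the compact admissible set of $X_C$, which therefore attains its maximum, giving the desired optimizer of the original problem.
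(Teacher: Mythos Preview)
Your proposal is correct. For necessity you use the same decomposition $X = X_C + X_N$ along $\textrm{Col}(B)\oplus\textrm{Null}(B)$ and the same perturbation idea as the paper (add a null-space direction to a column of a feasible $X$ and watch the objective run off), though your construction of the feasible $X$ in the linear case is cleaner than the paper's sketch.

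For sufficiency you actually go further than the paper, whose proof only addresses the ``only if'' direction. Your key extra step---upgrading the hypothesis to $A|_{\textrm{Null}(B)}\prec 0$ via the observation that $v^\top A v = 0$ with $v\in\textrm{Null}(B)$ forces $Av\perp\textrm{Null}(B)$ (by semidefiniteness) and $Av\perp\textrm{Col}(B)$ (by condition~(2)), hence $Av=0$, contradicting full rank---is exactly the right idea and is not present in the paper. The subsequent reduction (maximize out the strictly concave $X_N$-quadratic, leaving a continuous function on the compact generalized Stiefel manifold of $X_C$'s) is standard and correct. So your argument both matches and completes the paper's.
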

\begin{proof}
We decompose $X=X_B+X_{B^\perp}$, where $X_B=[u_1,...,u_r]$ with $u_i\in \textrm{Col}(B)$ and each column of $X_{B^\perp}=[v_1,...,v_r]$ with $v_i\in \textrm{Null}(B)$. Note such decomposition is unique. Then \eqref{eqn:original} becomes
\begin{align}\label{obj}
\min -\sum_{i=1}^r(u_i^\top A u_i)-2\sum_{i=1}^r(u_i^\top A v_i)-\sum_{i=1}^r(v_i^\top A v_i)  \quad \textrm{s.t.} \quad X_B^\top B X_B=I_r.
\end{align}
If \eqref{obj} has an optimal solution, we have  $v^\top A v \leq 0$, for all $v\in \textrm{Null}(B) $; otherwise, fixing the feasible $X_B$, we use $X_B=[\lambda v,...,\lambda v]$ and increase $\lambda$, then there is no lower bound of objective value. Further, given a vector $v\in \textrm{Null}(B)$ with $v^\top A v= 0$, $u^\top A v=0$ must hold for all $u\in \textrm{Col}(B)$; otherwise, W.L.O.G, we assume that $u_1\in \textrm{Col}(B)$ $u_1^\top A v>0$, we can construct a feasible $X_B=\mu [u_1,...,u_r]$, where $\mu$ is a normalization constant such that $\mu^2 u_1^\top Bu_1=1.$ Then constructing $X_{B^{\perp}}=\lambda[v,0,...0].$, if we increase $\lambda$, there is no lower bound the objective value. Therefore, for a vector $v\in \textrm{Null}(B)$, either $v^\top A v=0$, or $u^\top A v=0$ and $v^\top A v=0$ hold.
\end{proof}

 Throughout our following analysis, we exclude the ill-defined case. 

The idea of characterizing all the equilibria is analogous to the nonsingular case, but much more involved.  
Since B is singular, we need to use general inverses. For notationally convenience, we use block matrices in our analysis. We consider the eigenvalue decomposition of $B$ as follows:
\begin{align*}
	B &= \underbrace{\left[\begin{array}{cc}
		O_{11}^{B} & O_{12}^{B}\\
		O_{21}^{B} & O_{22}^{B}
	\end{array}\right]}_{O^B}\underbrace{ \left[\begin{array}{cc}
	\Lambda^{B}_{11} & 0\\
	0 & 0
\end{array}\right]}_{\Lambda^B} \underbrace{\left[\begin{array}{cc}
O^{B \top}_{11} & O^{B \top}_{21}\\
O^{B \top}_{12} & O^{B \top}_{22}
\end{array}\right]}_{O^{B\top}},
\end{align*}
where $O_{11}^{B}\in\RR^{m \times m}$, $O_{22}^{B}\in \RR^{(d-m)\times (d-m)}$, and $\Lambda^{B}_{11}=\diag(\lambda_1,...,\lambda_m)$ with $\lambda_1\geq\cdots \geq\lambda_m>0$ . We then left multiply $O^{B\top}$ and right multiply $O^B$ to $A$:
\begin{align*}
 O^{B \top} A O^{B}=: W = \left[\begin{array}{cc}
	W_{11} & W_{12}\\
	W_{21} & W_{22}
\end{array}\right],
\end{align*}
where $W_{11}\in \RR^{m \times m},  W_{22} \in \RR^{(d-m) \times (d-m)}$. Here, we assume $W_{22}$ is nonsingular (guaranteed in the well-defined case). Then we construct a general inverse of $\Lambda^B$. Specifically, given an arbitrary positive definite matrix $P\in\RR^{(d-m)\times (d-m)}$, we define $\Lambda^{B \dagger}(P)$ as
\begin{align*}
\Lambda^{B \dagger}(P):=\left[\begin{array}{cc}
(\Lambda^{B}_{11})^{-1}& 0\\
0 & P
\end{array}\right].
\end{align*}
 Note $\Lambda^{B \dagger}(P)$ is invertible and depends on $P$. Recall the primal variable $X$ at the equilibrium of $\cL(X,Y)$ satisfies 
\begin{align}\label{KKT}
AX=BX\cdot X^\top A X~~\textrm{and}~~ X^\top BX=I_r.
\end{align}
For notational simplicity, we define
\begin{align}\label{change}
 V(P) :=\left(\Lambda^{B \dagger}(P)\right)^{-\frac{1}{2}} O^{B\top}\hspace{-0.1cm} \left[\begin{array}{c}
		X_1 \\
		X_2
	\end{array}\right]= \left[\hspace{-0.1cm} \begin{array}{c}
		V_1 \\
		V_2(P)
	\end{array}\hspace{-0.1cm} \right],
\end{align} 
where $V_1,~X_1\in \RR^{m \times r},$ and $V_2(P),~X_2\in \RR^{(d-m)\times r}.$ Note that $V_1$ does not depend on $P$. From~\eqref{change} we have 
\begin{align}\label{invert}
 \left[\begin{array}{c}
		X_1 \\
		X_2
	\end{array}\right]=O^B \left(\Lambda^{B \dagger}(P)\right)^{\frac{1}{2}} \left[\hspace{-0.1cm} \begin{array}{c}
		V_1 \\
		V_2(P)
	\end{array}\hspace{-0.1cm} \right].
\end{align}
Combining \eqref{invert} and \eqref{KKT} we get the following equation system:
 \begin{subequations}
	\begin{empheq}[left={\empheqlbrace\,}]{align}
		&\tilde{A}(P)V(P)=\left[\begin{array}{c}V_1\\0 \end{array}\right]V(P)^\top \tilde{A}(P) V(P), \label{eqn:1} \\
		&V(P)^\top \diag(I_m,0)V(P)=I_r,\label{eqn:2}
	\end{empheq} \label{eqn:equiva}
\end{subequations}

\noindent where $\tilde{A}(P)=(\Lambda^{B\dagger}(P))^{\frac{1}{2}} W (\Lambda^{B\dagger}(P))^{\frac{1}{2}}$. The invertibility of $\Lambda^{B \dagger}(P)$ ensures that solving \eqref{KKT} is equivalent to doing the transformation \eqref{invert} to the solution of \eqref{eqn:equiva}. We then denote $$\hat{A}=(\Lambda^{B}_{11})^{-\frac{1}{2}} \left(W_{11}-W_{12}W^{-1}_{22}W_{21}\right)(\Lambda^{B}_{11})^{-\frac{1}{2}}$$ and consider its eigenvalue decomposition as $\hat{A}=O^{\hat{A}}\Lambda^{\hat{A}}O^{\hat{A}\top}$. The following theorem characterizes all the equilibria of $\cL(X,Y)$ with a singular $B$.
\begin{theorem}\label{thm:singular}
Given a full rank symmetric matrix $A\in\RR^{d\times d}$ and a positive semidefinite matrix $B\in \RR^{d\times d}$ with $\rk(B)=m<d$, satisfying the well-defined condition in Proposition~\ref{nooptimal},$(X,\cD(X))$ is an equilibrium of $\cL(X,Y)$ if and only if $X$ can be represented as
\begin{align*}
X=O^B\left[
\begin{array}{c} 
(\Lambda_{11}^B)^{-\frac{1}{2}}\cdot O_{:,\cI}^{\hat{A}}\\ 
-W_{22}^{-1}W_{12}^\top (\Lambda^B_{11})^{-\frac{1}{2}}O_{:,\cI}^{\hat{A}}  
\end{array}
\right]
\cdot \Psi,
\end{align*}
where $\Psi\in\cG$ and $\cI\in\cX_m$ is the column index set.
\end{theorem}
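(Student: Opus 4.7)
The plan is to work with the equivalent system~\eqref{eqn:equiva} and reduce it, via a Schur complement, to an eigenvalue-type equation for $\hat A$. Partition $\tilde A(P)$ conformally with $V(P)=[V_1;\,V_2(P)]$; the lower block of \eqref{eqn:1} becomes $\tilde A_{21}(P)V_1+\tilde A_{22}(P)V_2(P)=0$. The well-defined condition in Proposition~\ref{nooptimal} forces $W_{22}$ to be invertible: if $W_{22}\tilde v=0$ for some $\tilde v\ne 0$, the corresponding $v\in\textrm{Null}(B)$ satisfies $Av\in\textrm{Col}(B)$, giving $v^\top Av=0$ while $(Av)^\top Av>0$, contradicting both alternatives of Proposition~\ref{nooptimal}. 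Hence $\tilde A_{22}(P)=P^{1/2}W_{22}P^{1/2}$ is invertible and
\begin{align*}
V_2(P)=-P^{-1/2}W_{22}^{-1}W_{21}(\Lambda^B_{11})^{-1/2}V_1.
\end{align*}
Substituting into the upper block of \eqref{eqn:1}, together with the cancellation $V(P)^\top\tilde A(P)V(P)=V_1^\top\hat A V_1$, reduces the whole system to
\begin{align*}
\hat A V_1=V_1\cdot V_1^\top\hat A V_1,\qquad V_1^\top V_1=I_r,
\end{align*}
in which the auxiliary parameter $P$ has entirely disappeared.

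Next I would recognize this as characterizing the $r$-dimensional invariant subspaces of the symmetric matrix $\hat A=O^{\hat A}\Lambda^{\hat A}O^{\hat A\top}$: writing $M=V_1^\top\hat A V_1$, the identity $\hat A V_1=V_1M$ together with $V_1^\top V_1=I_r$ says that $\textrm{Col}(V_1)$ is $\hat A$-invariant. Repeating the off-eigenvector contradiction argument of Appendix~\ref{pf:thm:stationary}, a nontrivial mixture $c_1 O^{\hat A}_{:,i}+c_2 O^{\hat A}_{:,j}$ across distinct eigenvalues leaves a nonzero residual in $\hat A V_1-V_1V_1^\top\hat A V_1$, which forces
\begin{align*}
V_1=O^{\hat A}_{:,\cI}\Psi\quad\text{for some }\cI\subset[m]\text{ with }|\cI|=r\text{ and some }\Psi\in\cG.
\end{align*}

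Finally, I would reconstruct $X$ via \eqref{invert}. The upper block of $(\Lambda^{B\dagger}(P))^{1/2}V(P)$ equals $(\Lambda^B_{11})^{-1/2}O^{\hat A}_{:,\cI}\Psi$, and the lower block equals $P^{1/2}V_2(P)=-W_{22}^{-1}W_{12}^\top(\Lambda^B_{11})^{-1/2}O^{\hat A}_{:,\cI}\Psi$ after using $W_{21}=W_{12}^\top$; left-multiplying by $O^B$ yields exactly the stated form, with all $P$-factors cancelled. The converse is a direct plug-in check: $X^\top BX=I_r$ reduces to $O^{\hat A\top}_{:,\cI}O^{\hat A}_{:,\cI}=I_r$ because the lower component of $X$ is annihilated by the zero block of $\Lambda^B$, while $(I_d-BXX^\top)AX=0$ reduces in the upper block to the eigenvalue identity $\hat A O^{\hat A}_{:,\cI}=O^{\hat A}_{:,\cI}\Lambda^{\hat A}_{\cI,\cI}$, with the lower block of $O^{B\top}AX$ vanishing by the very choice of the bottom row of $X$.

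The main obstacle is managing the auxiliary matrix $P$ cleanly. One must simultaneously verify that $\hat A$ is well defined (which needs invertibility of $W_{22}$, supplied by Proposition~\ref{nooptimal}), that the Schur-complement reduction removes $P$ from the reduced system, and that the final closed-form expression for $X$ carries no residual $P$-dependence. Once these three points are settled, the remaining combinatorial content is exactly the invariant-subspace argument of Theorem~\ref{thm:stationary}, now played out on $\hat A\in\RR^{m\times m}$ rather than on $\tilde A\in\RR^{d\times d}$.
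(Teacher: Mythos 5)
Your proof is correct and follows essentially the same route as the paper's: eliminate $V_2(P)$ via the Schur complement in the lower block, observe that $V(P)^\top\tilde A(P)V(P)=V_1^\top\hat A V_1$, reduce to the orthogonal-constraint system $\hat A V_1=V_1V_1^\top\hat A V_1$, $V_1^\top V_1=I_r$, apply Theorem~\ref{thm:stationary} to identify $V_1=O^{\hat A}_{:,\cI}\Psi$, and reconstruct $X$ with the $P$-factors cancelling. One genuine improvement: you actually prove that the well-definedness condition of Proposition~\ref{nooptimal} forces $W_{22}$ to be nonsingular (via the $v\in\textrm{Null}(B)$, $Av\in\textrm{Col}(B)$, $(Av)^\top Av>0$ contradiction), whereas the paper merely asserts this; you also carry the correct sign on $V_2(P)=-P^{-1/2}W_{22}^{-1}W_{21}(\Lambda^B_{11})^{-1/2}V_1$, where the paper's displayed intermediate step drops the minus sign (a typo, since the paper's final formula for $X$ agrees with yours).
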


\begin{proof}
By definition, we have 
\begin{align}\label{eqn:begin}
\left\{
\begin{array}{c}
AX=BX\cdot Y \\
X^\top BX=I_r
\end{array}
\right.\Longrightarrow 
\left\{
\begin{array}{c}
AX=BX\cdot X^\top A X \\
X^\top BX=I_r
\end{array}
\right.,
\end{align}
We define $ V(P) :=\left(\Lambda^{B \dagger}(P)\right)^{-\frac{1}{2}} O^{B\top}\hspace{-0.1cm} \left[\begin{array}{c}
		X_1 \\
		X_2
	\end{array}\right]= \left[\hspace{-0.1cm} \begin{array}{c}
		V_1 \\
		V_2(P)
	\end{array}\hspace{-0.1cm} \right],$ where $V_1,~X_1\in \RR^{m \times r},$ and $V_2(P),~X_2\in \RR^{(d-m)\times r}.$ Note that $V_1$ does not depend on $P$. By \eqref{eqn:begin} and replacing $I_d$ with $O^B O^{B\top}$ and $\Lambda^{B\dagger}(P)^{\frac{1}{2}}\Lambda^{B\dagger}(P)^{-\frac{1}{2}}$, we have 
	\begin{subequations}
	\begin{empheq}[left={\empheqlbrace\,}]{align}
		&\tilde{A}(P)V(P)=\left[\begin{array}{c}V_1\\0 \end{array}\right]V(P)^\top \tilde{A}(P) V(P), \label{eqn:5} \\
		&V(P)^\top \diag(I_m,0)V(P)=I_r,\label{eqn:6}
	\end{empheq} \label{eqn:equiva1}
\end{subequations}
where $\tilde{A}(P)=(\Lambda^{B\dagger}(P))^{\frac{1}{2}} W (\Lambda^{B\dagger}(P))^{\frac{1}{2}}$. Simplifying \eqref{eqn:5}, we obtain
\begin{align*}
\left\{
\begin{array}{c}
W_{22}^{-1}W_{21}(\Lambda^B_{11})^{-\frac{1}{2}}V_1=P^{\frac{1}{2}}V_2(P)	 \\
V_1 V_1^\top (\Lambda^B_{11})^{-\frac{1}{2}} (W_{11}-W_{12}W_{22}^{-1}W_{21})    (\Lambda^B_{11})^{-\frac{1}{2}}
\end{array}
\right.
\end{align*}
Let $\hat{A}=(\Lambda^{B}_{11})^{-\frac{1}{2}} \left(W_{11}-W_{12}W^{-1}_{22}W_{21}\right)(\Lambda^{B}_{11})^{-\frac{1}{2}}$. Then, by \eqref{eqn:equiva1}, we obtain the following equations:
\begin{subequations}
	\begin{empheq}[left={\empheqlbrace\,}]{align}
		&\hat{A} V_1=V_1V_1^\top \hat{A} V_1 , \label{eqn:3} \\
		&V_1^\top V_1=I_r,\label{eqn:4}
	\end{empheq} \label{eqn:new}
\end{subequations}
Note \eqref{eqn:new} are the KKT conditions of the following problem:
\begin{align}\label{new}
V_1^*=\argmin_{V_1\in\RR^{m\times r}}  -\tr(V_1^\top \hat{A} V_1) \quad \textrm{s.t.  } ~~V_1^\top V_1=I_r.
\end{align} 
Because \eqref{new} is not a degenerate case, Theorem~\ref{thm:stationary} can be directly applied to \eqref{new}. Then, we get the stable equilibria and unstable equilibria of \eqref{new}. Specifically, denote the eigenvalue decomposition of $\hat{A}$ as $\hat{A}=O^{\hat{A}}\Lambda^{\hat{A}}O^{\hat{A}\top}$. Then we know the equilibrium of \eqref{eqn:new} can be represented as $V_1=O^{\hat{A}}_{:,\cI}\cdot \Psi$, where $\cI\in\Big\{\{i_1,...,i_r\}:\{i_1,...,i_r\}\subseteq [m]\Big\}$ and $\Psi\in\cG$. Then, we know the primal variable $X$ at an equilibrium of $\cL(X,Y)$ satisfies 
\begin{align*}
X=O^B\left[
\begin{array}{c} 
(\Lambda_{11}^B)^{-\frac{1}{2}}\cdot O_{:,\cI}^{\hat{A}}\\ 
-W_{22}^{-1}W_{12}^\top (\Lambda^B_{11})^{-\frac{1}{2}}O_{:,\cI}^{\hat{A}}  
\end{array}
\right]
\cdot \Psi,
\end{align*}
where $O_{:,\cI}^{\hat{A}}$ is an equilibrium for the Lagrangian function of \eqref{new}.
 \end{proof}

 Theorem~\ref{thm:singular} implies that for the well-defined degenerated case, there are only $\binom{m}{r}$ equilibria unique in the sense of invariant group, since $B$ is rank deficient.

\section{Proofs for the Convergence Rate of Algorithm.}

\subsection{Proof of Lemma~\ref{lem:ode}}\label{pf:lem:ode}
\begin{proof}
	Denote $k=\lfloor \frac{t}{\eta} \rfloor$, $\Delta(t)=w^{(\eta)}(t+\eta)-w^{(\eta)}(t)$,  $\Delta_i$ as the $i$-th component of $\Delta$. For notational simplicity, we may drop $(t)$ if it is clear from the context. By the definition of $w^{\eta}(t)$, we have 
	\begin{align}\label{equ:basic}
	\frac{1}{\eta} \EE \left(\Delta(t) \Big| w^{(\eta)}(t) \right)&  =\frac{1}{\eta}\EE\left(W^{(k+1)}-W^{(k)}\big| W^{(k)}\right)\notag\\
	& =\frac{1}{\eta}\EE\left[\eta \left(\Lambda^B-(\Lambda^B)^{\frac{1}{2}}\hat{\Lambda}_B^{(k)}(\Lambda^B)^{-\frac{1}{2}}W^{(k)}W^{(k)\top}\right)\cdot\tilde{\Lambda}^{(k)}W^{(k)}\big| W^{(k)}\right]\notag\\
	& =\Lambda^Aw^{(\eta)}(t) -\left( w^{(\eta)}(t) \right)^\top\left(\Lambda^B\right)^{-\frac{1}{2}}\Lambda^A\left(\Lambda^B\right)^{-\frac{1}{2}}w^{(\eta)}(t)\Lambda^B  w^{(\eta)}(t).
	\end{align}
	Similarly, we calculate the infinitesimal conditional expectation of $v_{i,1}=\frac{(w^{(\eta)}_i)^{\mu_1}}{(w^{(\eta)}_1)^{\mu_i}}$ as
	\begin{align*}
	&\textstyle \frac{1}{\eta} \EE \left(\frac{\left(w^{(\eta)}_i\right)^{\mu_1}}{\left(w^{(\eta)}_1\right)^{\mu_i}}(t+\eta)-\frac{\left(w^{(\eta)}_i\right)^{\mu_1}}{\left(w^{(\eta)}_1\right)^{\mu_i}}(t) \Bigg|\frac{\left(w^{(\eta)}_i\right)^{\mu_1}}{\left(w^{(\eta)}_1\right)^{\mu_i}}(t)\right)\\
	\textstyle = &\frac{1}{\eta} \EE \left(\frac{\left(w^{(\eta)}_i(t)+\Delta_i\right)^{\mu_1}}{\left(w^{(\eta)}_1(t)+\Delta_1\right)^{\mu_i}}-\frac{\left(w^{(\eta)}_i(t)\right)^{\mu_1}}{\left(w^{(\eta)}_1(t)\right)^{\mu_i}} \Bigg|\frac{\left(w^{(\eta)}_i(t)\right)^{\mu_1}}{\left(w^{(\eta)}_1(t)\right)^{\mu_i}}\right)\\
	\textstyle  = &\frac{1}{\eta} \frac{\left(w^{(\eta)}_i(t)\right)^{\mu_1}}{\left(w^{(\eta)}_1(t)\right)^{\mu_i}} \EE \left( [1+\mu_1 \frac{\Delta_i}{w^{(\eta)}_i}+\cO(\eta^2)] \cdot [1-\mu_i \frac{\Delta_1}{w^{(\eta)}_1}+\cO(\eta^2)]-1 \Big| w^{(\eta)}(t) \right)\\
	\textstyle  =& \frac{1}{\eta} \frac{\left(w^{(\eta)}_i(t)\right)^{\mu_1}}{\left(w^{(\eta)}_1(t)\right)^{\mu_i}} \left(\frac{\mu_1}{w^{(\eta)}_i} \EE (\Delta_i\big| w^{(\eta)}(t)) -\frac{\mu_i}{w^{(\eta)}_1} \EE( \Delta_1\big| w^{(\eta)}(t)) \right)+\cO(\eta)\\
	\textstyle  = &\frac{\left(w^{(\eta)}_i(t)\right)^{\mu_1}}{\left(w^{(\eta)}_1(t)\right)^{\mu_i}} \Big[\frac{\mu_1}{w^{(\eta)}_i} \left( -\sum_{k=1}^d \frac{\lambda_k}{\mu_k}\left(w^{(\eta)}_k\right)^2 \mu_i w^{(\eta)}_i + \lambda_iw^{(\eta)}_i \right)- \\
	\textstyle&~~~~~~~~~~~~~~~~~ \frac{\mu_i}{w^{(\eta)}_1} \left( -\sum_{k=1}^d \frac{\lambda_k}{\mu_k}\left(w^{(\eta)}_k\right)^2 \mu_1 w^{(\eta)}_1 + \lambda_1w^{(\eta)}_1 \right)\Big]+\cO(\eta)\\
	&\textstyle  = \frac{\left(w^{(\eta)}_i(t)\right)^{\mu_1}}{\left(w^{(\eta)}_1(t)\right)^{\mu_i}} \mu_1\mu_i \left( \beta_i-\beta_1\right)+\cO(\eta),
	\end{align*}
	where the third equality holds because of the Taylor expansion, the fourth holds for $\Delta$ is order of $\cO(\eta)$ and the last equality holds due to \eqref{equ:basic}.
	Then, we calculate the infinitesimal conditional variance. From the update of $W$ in \eqref{equivalent}, if $t\in [0,~T]$ with a finite $T$, then $w^{(\eta)}(t)$ is bounded with probability $1$. Denote $\norm{w^{(\eta)}(t)}_2^2 \leq D < \infty$. Then we have
	\begin{align*}
	&\textstyle \frac{1}{\eta} \EE \left[\left(\frac{\left(w^{(\eta)}_i\right)^{\mu_1}}{\left(w^{(\eta)}_1\right)^{\mu_i}}(t+\eta)-\frac{\left(w^{(\eta)}_i\right)^{\mu_1}}{\left(w^{(\eta)}_1\right)^{\mu_i}}(t) \right)^2\Bigg|\frac{\left(w^{(\eta)}_i\right)^{\mu_1}}{\left(w^{(\eta)}_1\right)^{\mu_i}}(t)\right] \\
	\textstyle=&  \frac{1}{\eta}\frac{\left(w^{(\eta)}_i(t)\right)^{2\mu_1}}{\left(w^{(\eta)}_1(t)\right)^{2\mu_i}}\EE\left[\left( \mu_1 \frac{\Delta_i}{w^{(\eta)}_i}-\mu_i\frac{\Delta_1}{w^{(\eta)}_1}\right)^2\bigg|w^{(\eta)}(t) \right]+\cO(\eta^2)\\
	\textstyle  \leq &\frac{2}{\eta}\frac{\left(w^{(\eta)}_i(t)\right)^{2\mu_1}}{\left(w^{(\eta)}_1(t)\right)^{2\mu_i}} \EE\left[ \left(\frac{\mu_1}{w^{(\eta)}_i}\right)^2 \Delta_i^2+\left(\frac{\mu_i}{w^{(\eta)}_1}\right)^2 \Delta_1^2\big| w^{(\eta)}(t)\right]+\cO(\eta^2)\\
	\textstyle  \leq &4\eta\frac{\left(w^{(\eta)}_i(t)\right)^{2\mu_1}}{\left(w^{(\eta)}_1(t)\right)^{2\mu_i}} \EE\Big[ \frac{\left((w^{(\eta)})^\top \tilde{\Lambda}^{(k)} w^{(\eta)}\right)^2\left(e_i^\top (\hat{\Lambda}_B^{(k)})\left(\Lambda^B\right)^{-\frac{1}{2}}w^{(\eta)}\right)^2+\mu_i\left(e_i\tilde{\Lambda}^{(k)}w^{(\eta)}\right)^2}{(w^{(\eta)}_i)^2} \mu_i \mu_1^2\\
	&\textstyle  \hspace{0.2in}+\frac{\left((w^{(\eta)})^\top \tilde{\Lambda}^{(k)} w^{(\eta)}\right)^2\left(e_1^\top (\hat{\Lambda}_B^{(k)})\left(\Lambda^B\right)^{-\frac{1}{2}}w^{(\eta)}\right)^2+\mu_1\left(e_1\tilde{\Lambda}^{(k)}w^{(\eta)}\right)^2}{(w^{(\eta)}_1)^2}   \mu_1\mu_i^2   \big| w^{(\eta)}(t)\Big]+\cO(\eta^2)\\
	\textstyle \leq &4\eta^{2\delta} \left(C \frac{C_0C_1}{\mu_{\min}^2}D^3\mu_i\mu_1^2+\mu_i^2\mu_1^2\frac{C_0}{\mu_{\min}}D\right)+\cO(\eta)\\
	\textstyle  =& \cO(\eta^{2\delta})\xrightarrow{\eta\rightarrow 0}0,
	\end{align*}
	where the second inequality holds because of the mean inequality and the last inequality is from the independence of $A^{(k)}$ and $B^{(k)}$, $(w^{(\eta)})^\top \tilde{\Lambda}w^{(\eta)}\leq \norm{\tilde{\Lambda}}_2(w^{(\eta)})^\top w^{(\eta)}\leq \frac{\norm{A^{(k)}}_2}{\mu_{\min}}D$, since $\tilde{\Lambda}$ is symmetric, and $C=\frac{\left(w^{(\eta)}_i(t)\right)^{2\mu_1-2}}{\left(w^{(\eta)}_1(t)\right)^{2\mu_i}}$.
	By Section 4 of Chapter 7 in \cite{ethier2009markov}, we have that when $t\in [0,T]$, as $\eta \rightarrow 0$, $\frac{(w^{(\eta)}_i)^{\mu_1}}{(w^{(\eta)}_1)^{\mu_i}}$ weakly converges to the solution of~\eqref{equ:ode} if they have the same initial solutions. Then, let $T\rightarrow \infty$, we know the convergence of $\frac{(w^{(\eta)}_i)^{\mu_1}}{(w^{(\eta)}_1)^{\mu_i}}$ holds at any time $t$. 
	Note we can replace $1$ by $j$, where $j\neq i$, and the proof still holds.
       
        Moreover, using the same techniques, we can show that for all $i\in [d]$, $w_i^{(\eta)}$ converges to the solution of the following equation: 
	\begin{align}\label{conv}
	\frac{dw_i}{dt}=\mu_i(\beta_i-\sum_{j=1}^d \beta_j w_j^2 )w_i.
	\end{align}
 Note that if any $w_i> 1$, $\mu_i(\beta_i-\sum_{j=1}^d \beta_j w_j^2 )w_i<0$, and if $\sum_{j=1}^d  w_j^2<1$, $\mu_1(\beta_1-\sum_{j=1}^d \beta_j w_j^2 )w_1>0$, which means that $w_1$ will increase. This further indicates that $w_1$ converges to $1$, while $w_i$ converges to 0 for all $i\neq 1$. This shows our algorithm converges to the neighbor of the global optima. 
\end{proof}

\subsection{Proof of Lemma~\ref{lem:sadd}}\label{pf:lem:sadd}
\begin{proof}
We prove this by contradiction. Assume the conclusion does not hold, that is there exists a constant $C>0,$ such that for any  $\eta'>0$ we have $$\sup_{\eta\leq \eta'}\PP(\sup_t |z^{(\eta)}_i(t)|\leq C)=1.$$ That implies there exists a sequence $\{\eta_n\}_{n=1}^\infty$ converging to $0$ such that 
\begin{align}\label{eq_contra}
\lim_{n\rightarrow\infty}\PP(\sup_t |z^{(\eta_n)}_i(t)|\leq C)= 1.
\end{align}
Thus, condition (i) in Theorem 2.4 \citep{nowakowski2013multi} holds. We next check the second condition. When $\sup_t |z^{(\eta_n)}_i(t)|\leq C$ holds, Assumption \ref{assmp} yields that $Z^{(\eta_n,k+1)}_i-z^{(\eta_n,k)}_i=C'\eta_n,$ where $C'$ is some constant. Thus, for any $t,\epsilon>0,$ we have $$|z_i^{(\eta_n)}(t)-Z_i^{(\eta_n)}(t+\epsilon)|=\frac{\epsilon}{\eta} C'\eta=C'\epsilon.$$ Thus, condition (ii) in Theorem Theorem 2.4 \citep{nowakowski2013multi} holds. Then we have  $\{Z_i^{(\eta_n)}(\cdot)\}_n$ is tight and thus converges weakly. We then calculate the infinitesimal conditional expectation
	\begin{align*}
	\frac{d}{dt} \EE(z^{(\eta)}_j(t)) & = \frac{1}{\eta}\EE\left(z^{(\eta)}_j(t+\eta)-z^{(\eta)}_j(t)\big|z^{(\eta)}_j(t)\right)  =  \eta^{-\frac{3}{2}}\EE\left(w^{(\eta)}_j(t+\eta)-w^{(\eta)}_j(t)\big|w^{(\eta)}_j(t)\right)\\
	& = - \eta^{-\frac{1}{2}}\left[\left( w^{(\eta)}(t) \right)^\top\left(\Lambda^B\right)^{-\frac{1}{2}}(\Lambda^A)\left(\Lambda^B\right)^{-\frac{1}{2}}w^{(\eta)}(t)\cdot\left(\Lambda^B\right) w^{(\eta)}(t)-(\Lambda^A)w^{(\eta)}(t)\right]_j\\
	& = \lambda_i z_j -\beta_i\mu_j z_j +\cO(\eta^{1-2\delta}).
	\end{align*}
	The last equality holds due to the fact that our initial point is near the saddle point $w_i^{(\eta)}(t)\approx e_i$ and $|w_j^{(\eta)}(t)|\leq C\eta^{\frac{1}{2}+\delta} $ . Next, we turn to the infinitesimal conditional variance,
	\begin{align*}
	&\textstyle \frac{1}{\eta}\EE\left[\left(z^{(\eta)}_j(t+\eta)-z^{(\eta)}_j(t)\right)^2\big|z^{(\eta)}_j(t)\right]\\
	\textstyle = &\EE\left[\left(e_j^\top \left(\left(\Lambda^B\right)^{\frac{1}{2}}\hat{\Lambda}_B^{(k)}\left(\Lambda^B\right)^{-\frac{1}{2}} w^{(k)}w^{(l)\top}- \Lambda^B\right)\cdot\tilde{\Lambda}w^{(k)} \right)^2\big|w^{(k)}\right]\\
	\textstyle = &\EE\left[\left(\left(\hat{\Lambda}_B^{(k)}\right)_{j,i}\cdot \sqrt{\mu_j/\mu_i}\cdot \tilde{\Lambda}_{i,i}-\mu_j\tilde{\Lambda}_{j,i}\right)^2 \right]+\cO(\eta^{3-6\delta})\\ =& G_{j,i}+\cO(\eta^{3-6\delta})
	\textstyle\leq 2\left(\frac{\mu_1}{\mu_j}\cdot C_0 \cdot C_1+\mu_i^2\cdot C_1\right).
	\end{align*}
	Then, we get the limit stochastic differential equation,
\begin{equation*}
	d z_j(t) =\left( -\beta_j\mu_i \cdot z_i +\lambda_i z_i \right)dt+\sqrt{G_{j,i}}d B(t)~~\textrm{for $j\in[d]\backslash\{i\}$.}
\end{equation*}
	Therefore, $\{Z_i^{(\eta_n)}(\cdot)\}$ converges weakly to a 
 process defined by the equation above, which is an unstable O-U process with mean $0$ and exploding variance. Thus, for any $\tau$, there exist a time $t'$, such that
$$\PP(|z_i(t')|\geq C)\geq 2\tau.$$
Since $\{z_i^{(\eta_n)}\}_n$ converges weakly to $z_i,$ thus $\{z_i^{(\eta_n)}(t')\}_n$ converges in distribution to $Z_(t').$ This implies that there exists an $N>0$, such that for any $n>N$ 
$$|{\PP(|z_i(T)|\geq C)-\PP(|z_i^{(\eta_n)}(T)|\geq C)}|\leq {\tau}.$$
Then we find a $t'$ such that 
$$\PP(|z_i^{(\eta_n)}(t')|\geq C)\geq {\tau}, \forall n>N, $$ 
or equivalently $$\PP(|z_i^{(\eta_n)}(t')|\leq C)< 1-{\tau},  \forall n>N.$$ 
Since $\left\{\omega\big|\sup_t |z_i^{(\eta_n)}(t)(\omega)|\leq C\right\}\subset\left\{\omega\big||z_i^{(\eta_n)}(\tau')(\omega)<C\right\},$ we have 
$$\PP(\sup_t |w_i^{(\eta_n)}(t)|\leq C\sqrt{\eta_n})=\PP(\sup_t |z_i^{(\eta_n)}(t)|\leq C)\leq 1-{\delta}, \forall n>N, $$ 
which leads to a contradiction with \eqref{eq_contra}. Our assumption does not hold.

\end{proof}

\subsection{Proof of Lemma~\ref{lem:sde}}\label{pf:lem:sde}
\begin{proof}
	Suppose the initial is near the stable equilibria, i.e., $|w^{(\eta)}_1(0)-1|\leq C \eta^{\frac{1}{2}+\delta}$ and $|w^{(\eta)}_j(0)|\leq C \eta^{\frac{1}{2}+\delta}$ for all $j\neq 1$.First we show that $\norm{w^{(\eta)}(t)}_2 \rightarrow 1$ as $t \rightarrow \infty$. With update~\eqref{equivalent}, we show $w^{(\eta)\top} w^{(\eta)}(t)$ weakly converges to the following ODE by a similar proof in Lemma~\ref{lem:ode}: 
\begin{align*}
	\frac{d}{dt}\EE \left(w^{(\eta)\top} w^{(\eta)}(t) \right) & = - w^{(\eta)\top}\left(\Lambda^B\right)^{-\frac{1}{2}}(\Lambda^A)\left(\Lambda^B\right)^{-\frac{1}{2}}w^{(\eta)}\cdot w^{(\eta)\top} \Lambda^B w^{(\eta)}+w^{(\eta)\top}(\Lambda^A)w^{(\eta)}+\cO(\eta)\\
	& = -\lambda_1 \left( \norm{w^{(\eta)}}_2^4-\norm{w^{(\eta)}}_2^2\right)+\cO(\eta^{1-2\delta}),
	\end{align*}
	Similarly, we can bound the infinitesimal conditional variance. Therefore, the norm of $w$ weakly converges to the following ODE:
	\begin{align*}
	d x = -\lambda_1 \left( x^2-x\right) dt.
	\end{align*}
	The solution of the above ODE is
	\begin{align*}
	x=\left\{
	\begin{array}{cc}
	\frac{1}{1-\exp(-\lambda_1 t+C)} & \quad\textrm{if $x>1$}\\
	\frac{1}{1+\exp(-\lambda_1 t+C)} & \quad\textrm{if $x<1$}\\
	1 & \quad \textrm{if $x=1$}
	\end{array}
	\right..
	\end{align*}
	This implies that $\| w^{(\eta)}(t)\|_2$ converges to 1 as $t \rightarrow \infty$. Then we calculate the infinitesimal conditional expectation for $i\neq 1$
	\begin{align*}
	\frac{d}{dt} \EE(z^{(\eta)}_i(t)) & = \frac{1}{\eta}\EE\left(z^{(\eta)}_i(t+\eta)-z^{(\eta)}_i(t)\big|z^{(\eta)}_i(t)\right)  =  \eta^{-\frac{3}{2}}\EE\left(w^{(\eta)}_i(t+\eta)-w^{(\eta)}_i(t)\big|w^{(\eta)}_i(t)\right)\\
	& = - \eta^{-\frac{1}{2}}\left[\left( w^{(\eta)}(t) \right)^\top\left(\Lambda^B\right)^{-\frac{1}{2}}(\Lambda^A)\left(\Lambda^B\right)^{-\frac{1}{2}}w^{(\eta)}(t)\cdot\left(\Lambda^B\right) w^{(\eta)}(t)-(\Lambda^A)w^{(\eta)}(t)\right]_i\\
	& = \lambda_i z_i -\beta_1\mu_i z_i +\cO(\eta^{1-2\delta}).
	\end{align*}
 The last equality is from the fact that our initial point is near an optimum. Next, we turn to the infinitesimal conditional variance,
	\begin{align*}
	&\textstyle \frac{1}{\eta}\EE\left[\left(z^{(\eta)}_i(t+\eta)-z^{(\eta)}_i(t)\right)^2\big|z^{(\eta)}_i(t)\right] = \EE\left[\left(e_i^\top \left(\left(\Lambda^B\right)^{\frac{1}{2}}\hat{\Lambda}_B^{(k)}\left(\Lambda^B\right)^{-\frac{1}{2}} w^{(k)}w^{(l)\top}- \Lambda^B\right)\cdot\tilde{\Lambda}w^{(k)} \right)^2\big|w^{(k)}\right]\\
	&\textstyle = \EE\left[\left(\left(\hat{\Lambda}_B^{(k)}\right)_{i,1}\cdot \sqrt{\mu_i/\mu_1} \cdot \tilde{\Lambda}_{1,1}-\mu_i\tilde{\Lambda}_{i,1}\right)^2 \right]+\cO(\eta^{3-6\delta}) \\
	&\textstyle= G_{i,1}+\cO(\eta^{3-6\delta}) \leq 2\left(\frac{\mu_i}{\mu_1}\cdot C_0 \cdot C_1+\mu_i^2\cdot C_1\right).
	\end{align*}
	By Section 4 of Chapter 7 in \cite{ethier2009markov}, we have that the algorithm converges to the solution of~\eqref{equ:sde} if it is already near our optimal solution.
\end{proof}

\subsection{Proof of Theorem~\ref{thm:num-iter-step-size}}\label{pf:thm:num-iter-step-size}
\begin{proof}

Assume the initial is near a saddle point, $e_i$. According to Lemma~\ref{lem:sadd} and  \eqref{equ:sadd}, we obtain the closed form solution of \eqref{equ:sadd} as follows:
\begin{align*}
	z_j(t)&=z_j(0)\exp\left(-\mu_j\left(\beta_i-\beta_j\right)t\right)+\sqrt{G_{j,i}}\int_0^t\exp\left( \mu_j\left(\beta_i-\beta_j \right)\left(s-t\right)\right)dB(s)\\
	&=\Big(\underbrace{z_j(0)+\sqrt{G_{j,i}}\int_0^t\exp\big( \mu_j(\beta_i-\beta_j )s\big)dB(s)}_{Q_1}\Big) \underbrace{\exp\left(-\mu_j\left(\beta_i-\beta_j\right)t\right)}_{Q_2}.
	\end{align*}

We consider $j=1$. Note at time $t$, $Q_1$ essentially is a random variable with mean $z_1(0)$ and variance $\frac{G_{1,i}\mu_1}{2(\beta_1-\beta_i)}\left(1-\exp\big(-2\mu_1(\beta_1-\beta_i)t\big)\right)$, which has an upper bound $\frac{G_{1,i}\mu_1}{2(\beta_1-\beta_i)}$. $Q_2$, however, amplifies the magnitude of $Q_1$. Then it forces the algorithm escaping from the saddle point $e_i$. We consider the event  $\{w_1(t)^2 > \eta\}$ and a random variable $v(t)\sim N\Big(0,\frac{G_{1,i}\mu_1}{2(\beta_1-\beta_i)}\left(\exp\big(2\mu_1(\beta_1-\beta_i)t\big)-1\right)\Big)$. Because $z_j(0)$ might not be $0$, we have
\begin{align}\label{upper}
	&\PP(w_1(t)^2>\eta) \geq \notag \PP(v^2(t)>1).
\end{align} Let the right hand side of \eqref{upper} larger than $95\%$. 
Then with a sufficiently small $\eta$, we need
\begin{align}\label{eqn_sde_upper_1}
T_1\asymp \frac{1}{\mu_1(\beta_1-\beta_i) }\log(\frac{200 (\beta_1-\beta_i)}{\mu_1 G_{1,i}}+1)
\end{align}
such that  $\PP(|w^{(\eta)}_1(T_1)|_2^2> \eta)=90\%$.

%
%
%
%

Now we consider the time required to converge under the ODE approximation. 

By Lemma~\ref{lem:ode} with $j=1$, after restarting the counter of time, we have
\begin{align*}
\frac{w_1^{\mu_i}(t)}{w_{i}^{\mu_1}(t)}\geq \eta^{\mu_i/2}\exp(\mu_1\mu_i(\beta_1-\beta_i)t).
\end{align*}
Let the right hand side equal to $1$. Then with a sufficiently small $\eta$ we need 
\begin{align}\label{eqn_ode_1}
T_2 \asymp \frac{\mu_{\max}}{\mu_1\mu_{\min}\cdot \sf{gap}}\log(\eta^{-1}).
\end{align}
such that $\PP\left(\frac{w_i^{(\eta)\mu_1}(T_2)}{w_{1}^{(\eta)\mu_i}(T_2)}\leq 1\right)=\frac{5}{6}$.

Then let $i=1$ in Lemma~\ref{lem:ode}. After restarting the counter of time, we have
\begin{align*}
& \frac{w_i^{\mu_1}(t)}{w_{1}^{\mu_i}(t)}\leq C \exp(\mu_{\max})\exp(\mu_1\mu_i(\beta_i-\beta_1)t)\\
\Longrightarrow & w_i^2\leq  \left(C\exp(\mu_{\max})\exp(\mu_1\mu_i(\beta_i-\beta_1)t)\right)^{2/\mu_1}
\end{align*}
where $\exp(\mu_{\max})$ comes from the above stage and $C$ is a constant containing $G_{1,i}$ and $G_{i,j}$. The second inequality holds due to the fact that $w_1\leq 1$, mentioned in the proof of Lemma~\ref{lem:ode}.
Therefore, given $\sum_{i=2}^d w_i^2\leq \kappa \eta^{1+2\delta}$ and a sufficiently small $\eta$, we need 
\begin{align}\label{eqn_ode_2}
T_2'\asymp \frac{\mu_{\max}}{\mu_1\mu_{\min}\cdot \sf{gap}}\log (\eta^{-1})
\end{align}
such that $\PP\left(\frac{|w^{(\eta)}_1(T_2')|^2}{\norm{w^{(\eta)}(T_2')}_2^2}>1-\kappa \eta^{1+2\delta}\right)=\frac{8}{9}$.

Then the algorithm goes into Phase III. According to Lemma~\ref{lem:sde} and \eqref{equ:sde}, we obtain the closed form solution of \eqref{equ:sde} as follows:
\begin{align*}
	z_i(t)=z_i(0)\exp\left(-\mu_i\left(\beta_1-\beta_i\right)t\right)+\sqrt{G_{i,1}}\int_0^t\exp\left( \mu_i\left(\beta_1-\beta_i \right)\left(s-t\right)\right)dB(s).
\end{align*}
By the Ito isometry property of the Ito-Integral, we have
\begin{align}
	\EE \left(z_i(t)\right)^2=\left(z_i(0)\right)^2 e^{-2\mu_i\left(\beta_1-\beta_i\right)t}+\frac{G_{i,1}}{2 \mu_i\left(\beta_1-\beta_i \right)}\left[ 1-e^{ -2\mu_i\left(\beta_1-\beta_i\right)t}\right].
\end{align}
Then we consider the complement of the event $\{w_1^2 > 1-\epsilon\}$. By Markov inequality, we have 
\begin{align}\label{upper}
	& \PP(w_1^2\leq1-\epsilon) \notag\\
	=&\PP \left(\sum_{i=2}^d w_i^2\geq \epsilon \right)\leq \frac{\EE\left( \sum_{i=2}^d w_i^2\right)}{\epsilon}=\frac{\EE \left(\sum_{i=2}^d z_i^2\right)}{\eta^{-1}\epsilon}\notag \\
        =& \frac{1}{\eta^{-1}\epsilon}\left(\sum_{i=2}^d  \left(z_i(0)\right)^2 e^{-2\mu_i\left(\beta_1-\beta_i\right)t}+\frac{G_i}{2 \mu_i\left(\beta_1-\beta_i \right)}\left[ 1-e^{ -2\mu_i\left(\beta_1-\beta_i\right)t}\right] \right)\notag\\
      \leq & \frac{1}{\eta^{-1}\epsilon}\left( \eta^{-1}\delta^2 e^{-2\mu_{\min}\cdot \sf{gap} \cdot t}+\frac{\phi}{2\mu_{\min}\cdot\sf{gap}}\right).
\end{align}
Let the right hand side of \eqref{upper} be no larger than $\frac{1}{16}$. 
\begin{align}
	& \frac{1}{\eta^{-1}\epsilon}\left( \eta^{-1}\delta^2 e^{-2\mu_{\min}\cdot \textrm{gap}\cdot t}+\frac{\phi}{2\mu_{\min}\cdot\sf{gap}}\right)\leq \frac{1}{16}\notag\\
	\Longrightarrow ~& e^{2\mu_{\min}\cdot \textrm{gap} \cdot t}\geq \frac{16 \cdot\mu_{\min}\cdot \sf{gap}\cdot \delta^2}{\epsilon\cdot \mu_{\min} \cdot \textrm{gap} -16\cdot \eta\cdot \phi}\notag.
\end{align}

Then after restarting the counter of time, we need 
\begin{align}\label{sde-upper}
 T_3\asymp\frac{1}{\mu_{\min}\cdot \sf{gap} }\cdot\log\left( \frac{\mu_{\min}\cdot \sf{gap}\cdot \delta^2}{\epsilon\cdot \mu_{\min} \cdot \textrm{gap} -16\cdot \eta\cdot \phi}\right).
\end{align}
such that  $\PP(w_1^2(T_3) \geq 1-\epsilon)\geq \frac{15}{16}$.

Combining \eqref{eqn_sde_upper_1}, \eqref{eqn_ode_1}, \eqref{eqn_ode_2}, \eqref{sde-upper}, if our algorithm start from a saddle, then with probability at least $\frac{5}{8}$, we need
\begin{align}\label{T}
T=T_1+T_2+T_2'+T_3\asymp \frac{\mu_{\max}/\mu_{\min}}{\mu_1\cdot\sf{gap}}\log\left(\eta^{-1}\right)
\end{align}
such that $w_1^2(T) > 1-\epsilon$. 
Moreover, we choose 
\begin{align}\label{choose}
	\eta \asymp \frac{\epsilon \cdot \mu_{\min}\cdot \sf{gap} }{\phi}.
\end{align}
Combining \eqref{T} and \eqref{choose} together, we get the asymptotic sample complexity
\begin{align}\label{choosing}
	N\asymp \frac{T}{\eta}\asymp  \frac{\phi\cdot \mu_{\max}/\mu_{\min}}{\epsilon\cdot\mu_1\cdot\mu_{\min}\cdot\sf{gap}^2}\log\left(\frac{\phi}{\epsilon\cdot\mu_{\min}\cdot \sf{gap}}\right)
\end{align}
such that with probability at least $\frac{5}{8}$, we have $\norm{\hat{W}-W^*}_2^2\leq \epsilon$.
\end{proof}

\end{document}